\documentclass{article}

\PassOptionsToPackage{numbers, compress}{natbib}

\usepackage[preprint]{nips_2018}

\usepackage{graphicx}
\usepackage{amsmath,amssymb,amsfonts,amstext,amsthm,mathrsfs}
\newcommand\myeq{\mathrel{\stackrel{\makebox[0pt]{\mbox{\normalfont\tiny {$\alpha=\frac{\sigma^2_{\max}(\Lb_+)}{2\sigma_{\min}(\Lb_-)}$}}}}{=}}}
\usepackage{thmtools}
\usepackage{thm-restate}
\usepackage{latexsym,epsf,psfrag,epsfig,epstopdf}
\graphicspath{{./figs/}}

\usepackage{dsfont}

\usepackage{algorithm}
\usepackage{algpseudocode}
\usepackage{caption}
\usepackage{subfig}
\usepackage{color} 
\usepackage{enumitem}

\newtheorem{definition}{Definition}

\newtheorem{lemma}{Lemma}
\newtheorem{theorem}{Theorem}
\newtheorem*{assum*}{Assumption}
\newtheorem{remark}{Remark}

\newtheorem{corollary}{Corollary}
\newtheorem{assumption}{Assumption}

\usepackage{xspace}
\newcommand{\rb}{\mathbf{r}}
\newcommand{\y}{\mathbf{y}}

\newcommand{\q}{\mathbf{q}}
\newcommand{\p}{\mathbf{p}}
\newcommand{\e}{\mathbf{e}}
\newcommand{\z}{\mathbf{z}}
\newcommand{\x}{\mathbf{x}}

\newcommand{\Lb}{\mathbf{L}}
\newcommand{\B}{\mathbf{B}}
\newcommand{\A}{\mathbf{A}}
\newcommand{\X}{\mathbf{X}}
\newcommand{\M}{\mathbf{M}}
\newcommand{\Q}{\mathbf{Q}}
\newcommand{\hx}{\hat{\x_T}}
\newcommand{\G}{{\mathbf{G}}}
\newcommand{\W}{\mathbf{W}}
\newcommand{\I}{\mathbf{I}}

\newcommand{\wb}{\mathbf{w}}

\newcommand{\RR}{\mathds{R}}

\usepackage{microtype}
\usepackage{graphicx}
\usepackage{booktabs} 

\usepackage{hyperref}

\title{Robust Decentralized Learning Using ADMM with Unreliable Agents}

%

\author{
  Qunwei Li\\
  Syracuse University\\
  \texttt{qli33@syr.edu} \\
  \And
  Bhavya Kailkhura\\
  Lawrence Livermore National Lab\\
  \texttt{kailkhura1@llnl.gov} \\
  \And
  Ryan Goldhahn \\
  Lawrence Livermore National Lab\\
  \texttt{goldhahn1@llnl.gov} \\ 
  \And
  Priyadip Ray\\
  Lawrence Livermore National Lab\\
  \texttt{ray34@llnl.gov} \\ 
  \And
  Pramod K.~Varshney\\
  Syracuse University\\
  \texttt{varshney@syr.edu} \\
}

\begin{document}

\maketitle

\begin{abstract}
	Many machine learning problems can be formulated as consensus optimization problems which can be solved efficiently via a cooperative multi-agent system. However, the agents in the system can be unreliable due to a variety of reasons: noise, faults and attacks. Providing erroneous updates leads the optimization process in a wrong direction, and degrades the performance of distributed machine learning algorithms. This paper considers the problem of decentralized learning using ADMM in the presence of unreliable agents. First, we rigorously analyze the effect of erroneous updates (in ADMM learning iterations) on the convergence behavior of multi-agent system. We show that the algorithm linearly converges to a neighborhood of the optimal solution under certain conditions and characterize the neighborhood size analytically. Next, we provide guidelines for network design to achieve a faster convergence. We also provide conditions on the erroneous updates for exact convergence to the optimal solution. Finally, to mitigate the influence of unreliable agents, we propose \textsf{ROAD}, a robust variant of ADMM, and show its resilience to unreliable agents with an exact convergence to the optimum.
\end{abstract}

\section{Introduction}\label{sec: intro}

Many machine learning and statistics problems fit into the general framework where a finite-sum of functions is to be optimized.
In general, the problem is formulated as
\begin{equation}
\label{f_sum}
\min_{\x\in \mathds{R}^N}\;f(\x),\quad f(\x)=\sum_{i=1}^D f_i(\x).
\end{equation}
The problem structure in~\eqref{f_sum} is applicable to collaborative autonomous inference in statistics, distributed cooperative control of unmanned vehicles in control theory, and training of models (such as, support vector machines, deep neural networks, etc.) in machine learning. 
Due to the emergence of the big data era and associated sizes of datasets, solving problem~\eqref{f_sum} at a single node (or agent) is often infeasible.
 This gives rise to the decentralized optimization setting~\cite{federated,boyd_admm}, in which the training data for the problem is stored and processed 
 across a number of interconnected nodes and the optimization problem is solved collectively by the cluster of nodes. The decentralized learning system can be implemented on an arbitrarily connected network of computational nodes that solves \eqref{f_sum} by treating it as a consensus optimization problem. 
There exist several decentralized optimization methods for solving~\eqref{f_sum}, including belief propagation~\cite{bp}, distributed subgradient descent algorithms~\cite{nedic}, dual averaging methods~\cite{duchi}, and the alternating direction method of multipliers (ADMM)~\cite{boyd_admm}. Among these, ADMM has drawn significant attention, as it is well suited for decentralized optimization and demonstrates fast convergence in many applications, such as online learning, decentralized collaborative learning, neural network training, and so on~\cite{Hong2017,admm-aistats,admm-icml}. 

However, most of these past works assume an ideal system where 
updates are not erroneous. This assumption is very restrictive and rarely met in practice which limits the applicability of these results. 
Note that due to the decentralized nature of the systems considered, computation over federated machines induces a higher risk of unreliability because of communication noise, crash failure, and adversarial attacks. Therefore, the design and analysis of decentralized optimization algorithms in the presence of these practical challenges is of utmost importance.
{{A systematic convergence analysis of ADMM in the presence of unreliable agents has been void for a long time. The reason is that unreliable agents have a large degree of freedom 
without abiding to an error model and this makes the convergence analysis significantly more challenging as existing proof techniques used in studying the convergence of ADMM do not directly apply. }}

{\textbf{Related work.} {Although, the problem of design and analysis of ADMM with unreliable agents has not been considered in the past, a related research direction is: inexact consensus ADMM~\cite{chang2015multi,bnouhachem2013inexact, yuan2005improvement, xiao2012inexact, ng2011inexact, gu2014inexact}.
The inexactness in ADMM can be categorized as of two different types. Type $1$ assumes that there are errors that can occur in an intermediate step of proximal mapping in each ADMM iteration. Type $2$ replaces the computationally complex calculation in each ADMM iteration by a proximity operator that can be computed more easily, and hence inexactness occurs. Error in inexact ADMM is induced implicitly in intermediate proximal mapping steps and, thus, has a specific restrictive and bounded form with amenable properties for convergence analysis (such as, it converges to zero). These assumptions are very limited in their ability to model unreliability in updates, and are different from what we have studied in our paper.
Furthermore, since the proof techniques for the convergence analysis of inexact ADMMs are designed on an algorithm-by-algorithm basis with restrictive assumptions on error, it lacks a unified framework to analyze the convergence problem of ADMM with an arbitrary error model (of utmost importance to cyber physical security and noisy communication channel scenarios).}}

{\textbf{Contributions.} { This paper proposes a unified framework to study the convergence analysis of decentralized ADMM algorithms in the presence of an arbitrary error model\footnote{Note that, the results in inexact ADMM literature~\cite{chang2015multi,bnouhachem2013inexact, yuan2005improvement, xiao2012inexact, ng2011inexact, gu2014inexact} can be seen as a special cases of our analysis.}. 
We consider a general error model where an unreliable agent $i$ adds an arbitrary error term $\e_i^k$ to its state value $\x_i^k$ at each time step $k$. The error first contaminates $\x_i^k$ and the resulting output $\x_i^k+\e_i^k$ is broadcast to the neighboring agents. First, we provide a comprehensive convergence analysis both for convex (and strongly convex) cost functions. Next, we show that ADMM converges to a neighborhood of the optimal solution if certain conditions involving the network topology, the properties of the objective function, and algorithm parameters, are satisfied. Guidelines are developed for network structure design and algorithm parameter optimization to achieve faster convergence. 	We also give several conditions on the errors such that exact convergence to the optimum can be achieved, instead to the neighborhood of the optimum. Finally, to mitigate the effect of unreliable agents, a robust variant of ADMM, referred to as \textsf{ROAD}, is proposed. We show that \textsf{ROAD} achieves exact convergence to the optimum with a rate of $\mathcal{O}(1/T)$ for convex cost functions.

\section{Problem Formulation}\label{sec: Pre}
%

\subsection{Decentralized Learning with ADMM}

Consider a network consisting of $D$
agents bidirectionally connected with $E$ edges.
We can describe the network as a symmetric directed graph $\mathcal{G}_d=\{\mathcal{V,A}\}$, where $\mathcal{V}$
is the set of vertices and $\mathcal{A}$ is the set of
arcs with $|\mathcal{A}|=2E$.
In a  distributed setup, a connected network of agents collaboratively minimize the sum of their local loss functions  over a common optimization variable. Each agent generates local updates individually and communicates with its neighbors to reach a network-wide common minimizer.
The decentralized learning problem, can be formulated as follows
\begin{equation}\label{consensus_form}
\min\limits_{\{\x_i\},\{\mathbf y_{ij}\}}\sum\limits_{i=1}^Df_i(\x_i),
\quad s.t.\ \x_i=\mathbf y_{ij},\ \x_j=\mathbf y_{ij}, \ \forall(i,j)\in \mathcal{A},
\end{equation}
where $\x_i \in \RR^N$ is the local optimization variable at agent $i$ and $\mathbf y_{ij}\in \RR^N$ is an auxiliary variable imposing the consensus
constraint on neighboring agents $i$ and $j$. 
Defining $\x\in \mathds {R}^{DN}$ as a vector concatenating all $\x_i$, $\mathbf y\in \mathbb{R}^{2EN}$ as a vector concatenating all $\mathbf y_{ij}$, \eqref{consensus_form} is written in a matrix form as
\begin{equation}
\min\limits_{\x,\mathbf y}f(\x)+g(\mathbf y),\quad
s.t. \ \A\x+\B\mathbf y=0,
\end{equation}
where $f(\x)=\sum\limits_{i=1}^Df_i(\x_i)$ and $g(\mathbf y)=0$. Here $\A=[\A_1;\A_2]; \A_1,\A_2\in \mathds{R}^{2EN\times LN}$ are both composed of $2E\times D$ blocks of $N\times N$ matrices. If
$(i,j)\in \mathcal{A}$ and $\mathbf y_{ij}$ is the $q$th block of $\mathbf y$, then the $(q,i)$th block of $\A_1$ and the $(q,j)$th block of $\A_2$ are $N\times N$ identity matrices $\mathbf I_N$; otherwise
the corresponding blocks are $N\times N$ zero matrices $\mathbf 0_N$. Also,we
have $\B=[-\mathbf I_{2EN};-\mathbf I_{2EN}]$ with $\mathbf I_{2EN}$ being a $2EN\times 2EN$
identity matrix.
Define the matrices: $\M_+=\A_1^T+\A_2^T$ and $\M_-=\A_1^T-\A_2^T$. Let $\W\in \mathds{R}^{DN\times DN}$ be a block diagonal matrix with its $(i,i)$th block being the degree of agent $i$ multiplying $\I_N$ and other blocks being $\mathbf 0_N$, $\Lb_+=\frac{1}{2} \M_+\M_+^T$, $\Lb_-=\frac{1}{2}\M_-\M_-^T$, and we know $\W=\frac{1}{2}(\Lb_++\Lb_-)$. These matrices are related
to the underlying network topology.

\subsection{Decentralized ADMM with Unreliable Agents}

The iterative updates of the decentralized ADMM algorithm are given by \cite{shi2014linear} as
\begin{equation}\label{shi_dec_up}
\begin{split}
&\x-\texttt{update}: \nabla f(\x^{k+1})+\alpha^k+2c\W\x^{k+1}=c\Lb_+\x^k,\\
&\mathbf \alpha-\texttt{update}: \alpha^{k+1}-\alpha^k-c\Lb_-\x^{k+1}=0.
\end{split}
\end{equation}
Note that $\x=[\x_1;\ldots;\x_D]$
where $\x_i\in \RR^N$ is the local update of agent $i$ and $\alpha=[\alpha_1;\ldots;\alpha_D]$ 
where $\alpha_i\in \mathds{R}^N$ is the local Lagrange multiplier of
agent $i$. Recalling the definitions of $\W$, $\Lb_+$ and $\Lb_-$, \eqref{shi_dec_up} results in
the decentralized update of agent $i$ given as follows
\begin{align*}
\begin{split}
&\nabla f_i(\x_i^{k+1})+\alpha_i^k+2c|\mathcal{N}_i|\x_i^{k+1}=c|\mathcal{N}_i|\x_i^k+c\sum\limits_{j\in \mathcal{N}_i} \x_j^k,\\
&\alpha_i^{k+1}=\alpha_i^k+c|\mathcal{N}_i|\x_i^{k+1}-c\sum\limits_{j\in \mathcal{N}_i}\x_j^{k+1},
\end{split}
\end{align*}
where $\mathcal{N}_i$ denotes the set of neighbors of agent $i$. 

In such a setup, we consider the case where a fraction of the agents are unreliable and generate erroneous updates. Assume that the true update is $\x^k$, and the erroneous update is modeled as $\x^k+\e^k$, which is denoted as $\z^k=\x^k+\e^k$. The corresponding algorithm becomes 
\begin{align*}
\begin{split}
&\nabla f_i(\x_i^{k+1})+\alpha_i^k+2c|\mathcal{N}_i|\x_i^{k+1}=c|\mathcal{N}_i|\z_i^k+c\sum\limits_{j\in \mathcal{N}_i} \z_j^k,\\
&\alpha_i^{k+1}=\alpha_i^k+c|\mathcal{N}_i|\z_i^{k+1}-c\sum\limits_{j\in \mathcal{N}_i}\z_j^{k+1}.
\end{split}
\end{align*}

For a clearer presentation, we will use the following form of the updates for our analysis
\begin{equation}\label{errorADMM}
	\begin{split}
	&\x-\texttt{update}: \nabla f(\x^{k+1})+\alpha^k+2c\W\x^{k+1}=c\Lb_+\z^k,\\
	&\alpha-\texttt{update}: \alpha^{k+1}-\alpha^k-c\Lb_-\z^{k+1}=0.
	\end{split}
\end{equation}
Compared to \eqref{shi_dec_up}, $\x^k$ is replaced by the erroneous update $\z^k$ in the first step, and $\x^{k+1}$ is replaced by $\z^{k+1}$ in the second step. The convergence analysis of \eqref{errorADMM} is nontrivial and is not a straightforward extension of the analysis with~\eqref{shi_dec_up} in \cite{shi2014linear}. Additionally, the analysis in \cite{shi2014linear} was restricted to strongly convex cost functions. We analyze the problem for both convex and strongly convex cost functions.

\subsection{Problem Assumptions}\label{sec: models}

We provide definitions and assumptions that will be used for the cost functions in our analysis.

\begin{definition}For a differentiable function $f(\x): \RR^{DN} \rightarrow \RR$:
\begin{itemize}
\item $f$ is $v$-strongly convex if $\forall \x,\y \in \mathds{R}^{DN}$, $f(\x)\ge f(\y) + \langle \nabla f(\y), \x-\y \rangle +v\|\x-\y\|^2$.
\item $f$ is $L$-smooth if $\forall \x, \y \in \mathds{R}^{DN}$, $\|\nabla f(\x)-\nabla f(\y)\| \le L \|\x-\y\|$.
\end{itemize}
\end{definition}
\begin{assumption}
For a differentiable function $f(\x): \RR^{DN} \rightarrow \RR$:
\begin{itemize}
\item The feasible $\x \in \RR^N$ is bounded as $\|\x\|\le V_1$.
\item The gradient $\nabla f(\x)$ is bounded as $\|\nabla f(\x)\|\le V_2$.
\end{itemize}
\end{assumption}
{{Note that these assumptions are very common in the analysis of first-order optimization methods~\cite{bottou2016optimization}.}}


\section{Convergence Analysis}\label{sec: result}

To effectively present the convergence results\footnote{Proofs of the theoretical analysis are provided in the supplementary material.}, 
we first introduce a few notations. 
Let $\Q=\mathbf V\mathbf \Sigma^{\frac{1}{2}}\mathbf  V^T$, where $\frac{\Lb_-}{2}=\mathbf V\mathbf \Sigma \mathbf V^T$ is the singular value decomposition of the positive semidefinite matrix $\frac{\Lb_-}{2}$. We also construct a new auxiliary sequence $\mathbf r^k=\sum \limits_{s=0}^k \Q(\x^s+\e^s)$.  Let $\z^\ast=\x^\ast$, where $\x^\ast$ denotes the optimal solution to the problem.
Define the auxiliary vector $\mathbf q^k$,  matrix $\mathbf p^k$, and matrix $\mathbf G$ as
\begin{align*}
	&\mathbf q^k=
	\begin{bmatrix}
	\mathbf r^k\\ \z^k
	\end{bmatrix}, 
	\mathbf p^k=
	\begin{bmatrix}
	\mathbf r^k\\ \x^k
	\end{bmatrix},
	\mathbf G=\begin{bmatrix}
	c\I & \mathbf 0\\ 
\mathbf	0 &c \Lb_+/2
	\end{bmatrix}.
\end{align*}
For a positive semidefinite matrix $\mathbf X$, we use $\sigma_{\min}(\X)$ as the nonzero smallest eigenvalue of matrix $\X$ and  $\sigma_{\max}(\X)$ as the nonzero largest eigenvalue in sequel. 

\subsection{Convex Case}

In this case, we assume convexity for the cost function and analyze the convergence of the ADMM algorithm in the presence of errors.
\begin{theorem}\label{convex}
There exists $\mathbf p=\begin{bmatrix}
\mathbf{r}\\ 
\x^\ast
\end{bmatrix}$ with $\rb=0$ such that
\begin{align}\label{convex_first}
	f(\x^{T})-f(\x^\ast) \le \|\q^{T-1}-\p\|^2_{\mathbf{G}},\; \text{and}
\end{align}
\begin{align}\label{convex_second}
\frac{\sum_{k=1}^T f(\x^k)}{T}-f(\x^\ast) \le \frac{\|\p^0-\p\|^2_\G}{T}+\frac{c}{T}\sum_{k=1}^T \left(\frac{\sigma^2_{\max}(\Lb_+)}{2\sigma_{\min}(\Lb_-)} \|\e^{k}\|^2_2+\langle \e^{k},2\Q\rb^{k})\rangle \right).
\end{align}
\end{theorem}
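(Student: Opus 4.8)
I would run the primal--dual ``energy'' argument behind~\cite{shi2014linear}, but carry the error $\e^k$ through every step. \emph{First,} I exploit the algebra of the auxiliary variables: since $\Q^2=\Lb_-/2$ and $\rb^{k+1}-\rb^k=\Q\z^{k+1}$, the $\alpha$-update in~\eqref{errorADMM} reads $\alpha^{k+1}-\alpha^k=c\Lb_-\z^{k+1}=2c\Q(\rb^{k+1}-\rb^k)$, so with the standard initialization of the multiplier one gets $\alpha^k=2c\Q\rb^k$ for all $k$. Substituting this, $2c\W=c\Lb_++c\Lb_-$, and $c\Lb_-\x^{k+1}=(\alpha^{k+1}-\alpha^k)-c\Lb_-\e^{k+1}$ into the $\x$-update collapses the two updates into a single ``perturbed optimality'' relation
\begin{equation*}
\nabla f(\x^{k+1})+2c\Q\rb^{k+1}+c\Lb_+(\z^{k+1}-\z^k)-2c\W\e^{k+1}=\zero .
\end{equation*}

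\emph{Second,} I apply convexity, $f(\x^{k+1})-f(\x^\ast)\le\langle\nabla f(\x^{k+1}),\x^{k+1}-\x^\ast\rangle$, plug in the relation above, and write $\x^{k+1}-\x^\ast=(\z^{k+1}-\x^\ast)-\e^{k+1}$ with $\z^\ast=\x^\ast$. Using that $\x^\ast$ is a consensus point, so $\Q\x^\ast=\Lb_-\x^\ast=\zero$, together with $\Q\z^s=\rb^s-\rb^{s-1}$, the error-free part of the inner product is reorganized by the three-point identity $\langle a-b,a-c\rangle_M=\tfrac12(\|a-b\|_M^2+\|a-c\|_M^2-\|b-c\|_M^2)$ (used with $M=\I$ on the $\rb$-coordinate and $M=\Lb_+$ on the primal coordinate) into exactly $\|\q^k-\p\|_\G^2-\|\q^{k+1}-\p\|_\G^2-\|\q^{k+1}-\q^k\|_\G^2$. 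Here the choice $\rb=\zero$ (equivalently $\Q\rb=\zero$) is precisely what annihilates the otherwise-leftover cross term $\langle\z^{k+1},\Q\rb\rangle$, which is why the statement fixes $\rb=\zero$; the existential quantifier is then trivially satisfied and the content is the two displayed bounds.

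\emph{Third,} I collect the $\e$-carrying remainder. The term $2c\langle\e^{k+1},\Q\rb^{k+1}\rangle$ is kept untouched---it is the $\langle\e^k,2\Q\rb^k\rangle$ appearing in~\eqref{convex_second}. The rest---$c\langle\Lb_+\e^{k+1},\z^{k+1}-\x^\ast\rangle$, $2c\langle\Q\e^{k+1},\rb^{k+1}-\rb^k\rangle$, $c\langle\z^{k+1}-\z^k,\e^{k+1}\rangle_{\Lb_+}$ and $-2c\|\e^{k+1}\|_\W^2$---is bounded by Young's inequality: the bilinear pieces carrying $\z^{k+1}-\z^k$ and $\rb^{k+1}-\rb^k$ are absorbed by the negative quadratic $-\|\q^{k+1}-\q^k\|_\G^2$, while $\langle\Lb_+\e^{k+1},\z^{k+1}-\x^\ast\rangle$ is routed through $\Q$ (via $\Q(\z^{k+1}-\x^\ast)=\rb^{k+1}-\rb^k$, the estimates $\|\Lb_+ v\|\le\sigma_{\max}(\Lb_+)\|v\|$ and $\|w\|^2\le\tfrac{2}{\sigma_{\min}(\Lb_-)}\|\Q w\|^2$ on $\mathrm{range}(\Q)$, and the boundedness hypothesis for the remaining component), which is where $\tfrac{\sigma_{\max}^2(\Lb_+)}{2\sigma_{\min}(\Lb_-)}$ enters. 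This produces the one-step estimate
\begin{equation*}
f(\x^{k+1})-f(\x^\ast)\le\|\q^k-\p\|_\G^2-\|\q^{k+1}-\p\|_\G^2+c\Big(\tfrac{\sigma_{\max}^2(\Lb_+)}{2\sigma_{\min}(\Lb_-)}\|\e^{k+1}\|_2^2+\langle\e^{k+1},2\Q\rb^{k+1}\rangle\Big).
\end{equation*}
Then~\eqref{convex_second} follows by summing over $k=0,\dots,T-1$ (so $\sum_k(f(\x^{k+1})-f(\x^\ast))=\sum_{k=1}^T(f(\x^k)-f(\x^\ast))$), telescoping to $\|\q^0-\p\|_\G^2-\|\q^T-\p\|_\G^2\le\|\p^0-\p\|_\G^2$ (no error is injected before the first update, so $\q^0=\p^0$), and dividing by $T$. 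For~\eqref{convex_first} one specializes to $k=T-1$; a single step lets one re-absorb the $\e^T$-remainder rather than extract it---a short Young's computation gives that the whole remainder is $\le\|\q^T-\p\|_\G^2+\|\q^T-\q^{T-1}\|_\G^2$---so that $f(\x^T)-f(\x^\ast)\le\|\q^{T-1}-\p\|_\G^2$.

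\emph{Main obstacle.} The delicate point is the third step. Unlike the error-free case of~\cite{shi2014linear}, eliminating the multiplier injects an $\e^{k+1}$ weighted by $\Lb_+$ and $\Lb_-$, and the resulting cross term $\langle\Lb_+\e^{k+1},\z^{k+1}-\x^\ast\rangle$ is \emph{not} a consecutive-iterate difference, so it cannot be soaked up directly by $-\|\q^{k+1}-\q^k\|_\G^2$; making it come out to exactly $\tfrac{\sigma_{\max}^2(\Lb_+)}{2\sigma_{\min}(\Lb_-)}\|\e^{k+1}\|_2^2$ requires carefully exploiting the subspace on which $\Q$ (equivalently $\Lb_-$) is invertible, together with the boundedness assumptions. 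Everything else is routine polarization bookkeeping, the only real care being to keep the $\q$/$\p$ distinction straight throughout.
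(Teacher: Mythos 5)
Your overall architecture is the same as the paper's: eliminate the multiplier to obtain the perturbed relation $\nabla f(\x^{k+1})+2c\Q\rb^{k+1}+c\Lb_+(\z^{k+1}-\z^k)-2c\W\e^{k+1}=\zero$ (the paper's Lemmas~1--2), apply convexity, polarize into $\G$-norms with $\rb=\zero$, and telescope. However, the step you yourself flag as delicate --- extracting exactly $c\,\tfrac{\sigma^2_{\max}(\Lb_+)}{2\sigma_{\min}(\Lb_-)}\|\e^{k+1}\|^2_2+2c\langle\e^{k+1},\Q\rb^{k+1}\rangle$ from the error remainder --- is asserted rather than carried out, and as sketched it does not close. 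After polarization your only negative quadratics are $-\tfrac{c}{2}\|\z^{k+1}-\z^k\|^2_{\Lb_+}$ and $-c\|\rb^{k+1}-\rb^k\|^2_2=-c\|\Q(\z^{k+1}-\x^\ast)\|^2_2$, but three cross terms compete for them: the one with $\Lb_+(\z^{k+1}-\z^k)$, the one with $\Q(\rb^{k+1}-\rb^k)$, and the distance-to-optimum term $c\langle\Lb_+\e^{k+1},\z^{k+1}-\x^\ast\rangle$. Spending $-c\|\Q(\z^{k+1}-\x^\ast)\|^2_2$ on the last one (which is what produces the ratio $\sigma^2_{\max}(\Lb_+)/2\sigma_{\min}(\Lb_-)$) leaves the second with nothing to absorb it, while splitting the budget changes the constant; moreover $\z^{k+1}-\x^\ast$ has a component in null$(\Q)=\mathrm{span}\{\mathbf 1\}$ on which $\Lb_+$ does not vanish, and controlling it through the boundedness assumption injects $V_1$-, $V_2$-dependent terms that do not appear in \eqref{convex_second}. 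The paper sidesteps all of this with a different decomposition for the second bound: it keeps the optimality relation in its $\x$-form, so the one-step inequality telescopes in $\|\p^k-\p\|^2_\G$ (which is also why $\|\p^0-\p\|^2_\G$ appears without your extra assumption $\q^0=\p^0$, i.e.\ $\e^0=\zero$), the error enters through $\Lb_+(\x^k-\z^k)=-\Lb_+\e^k$, $\Lb_-(\z^{k+1}-\x^{k+1})=\Lb_-\e^{k+1}$ and the retained term $\langle\e^{k+1},2\Q(\rb^{k+1}-\rb)\rangle$, and a separate negative term $-\|\Q\x^{k+1}\|^2_2$ is generated whose sole job is to absorb the single Young step, yielding the stated constant.

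There is also a flaw in your treatment of \eqref{convex_first}: the claim that the $\e^T$-remainder can be ``re-absorbed'' into $\|\q^T-\p\|^2_\G+\|\q^T-\q^{T-1}\|^2_\G$ by a Young computation cannot work, since any such estimate leaves an $\|\e^T\|^2_2$-order leftover with nothing available to pay for it. The correct (and simpler) observation, which is what the paper uses, is that neither $\x^T$ nor $\q^{T-1}$ depends on $\e^T$ --- the error at step $T$ perturbs only the broadcast $\z^T$ after $\x^T$ is computed --- so one may set $\e^T=\zero$ in the one-step bound, after which the remainder vanishes and dropping the negative squares gives \eqref{convex_first} directly.
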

Theorem \ref{convex} provides the upper bound for the residual of the function value over the iterations, and shows how errors accumulate and affect the convergence of the algorithm. In \eqref{convex_first}, the effect of the errors that occurred before the $T$-th iteration is represented by $\q^{T-1}$, which means that the previous errors have accumulated to impact the current algorithm state. It is observed in \eqref{convex_second} that the averaged function value approaches the neighborhood of the minimum function value in a sub-linear fashion, and the second term on the right hand side of the bound represents the radius of this neighborhood. It also shows that the algorithm converges sub-linearly if after a certain number of iterations, there are no errors in the updates. {Comparing to convergence rate of $\mathcal O(\frac{1}{T})$ with decentralized ADMM for convex programming, e.g., \cite{7870649},  our result is very different. In the presence of errors, the algorithm converges to the neighborhood of the minimizer with a rate of $\mathcal O(\frac{1}{T})$ as well, but the true convergence to the minimizer cannot be guaranteed. The bounds are obtained in the form of $\mathbf G$ norm. Recall the definition of $\mathbf G$, we can see that the structure of the network also plays a role in bounding the residual of the function value. Both the bounds show that a network with smaller $\sigma_{\max}(\Lb_+)$ (which is proportional to the network connectivity) is more resilient to errors.
{{Intuitively, a less connected network can lower the spread of the errors. However, a more connected network has a faster convergence speed. This observation also highlights a potential trade-off between the resilience and the convergence speed. }}

\subsection{Strongly Convex \& Lipschitz Continuous Case}

We assume that $f(\x)$ is $v$-strongly convex and $L$-smooth, and provide the convergence analysis.

\begin{theorem}\label{lemma6}
	There exists $\mathbf q^\ast=\begin{bmatrix}
	\mathbf r^\ast\\ 
	\x^\ast
	\end{bmatrix}$ such that for the $k$-th iteration,
	\begin{align*}
	\|\mathbf q^{k}-\mathbf q^\ast\|^2_\G \le & \frac{\|\q^{k-1}-\q^\ast\|^2_\G}{1+\delta}+\frac{P\|\e^{k}\|^2_2+\langle \e^{k}, \mathbf{s}\rangle}{1+\delta}
	\end{align*}
    with
   $
    \mathbf{s}=c\Lb_+(\z^{k}-\z^{k-1})+2c\Q(\mathbf r^{k}-\mathbf r^\ast)+2c\W(\x^{k}-\x^\ast),
    $
	where 
	$
	P=\frac{c^2 \delta\lambda_2\sigma^2_{\max}(\W)}{\sigma^2_{\min}(\Q)}+\frac{c^2\delta \lambda_3\sigma^2_{\max}(\Lb_+)}{4}
	$,\; \text{and}
	\begin{align*}
	\delta=\min\left\lbrace\frac{(\lambda_1-1)(\lambda_2-1)\sigma^2_{\min}(\Q)\sigma^2_{\min}(\Lb_+)}{\lambda_1\lambda_2\sigma^2_{\max}(\Lb_+)},\right. \left.\frac{4v(\lambda_2-1)(\lambda_3-1)\sigma^2_{\min}(\Q)}{\lambda_1\lambda_2(\lambda_3-1)L^2+c^2\lambda_3(\lambda_2-1)\sigma^2_{\max}(\Lb_+)\sigma^2_{\min}(\Q)}\right\rbrace
	\end{align*}
	with quantities $\lambda_1$, $\lambda_2$, and $\lambda_3$ being greater than 1.
\end{theorem}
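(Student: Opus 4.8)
The plan is to run the linear-convergence argument for (error-free) decentralized ADMM while tracking the error terms through every step, working with the reduced recursion in which the multiplier is eliminated. Writing $\alpha^k=2c\Q\mathbf{r}^k$ (the dual variable initialized in $\mathrm{range}(\Q)$), one has $\mathbf{r}^k-\mathbf{r}^{k-1}=\Q\z^k$, and using $\Lb_-=2\Q^2$ and $\W=\tfrac12(\Lb_++\Lb_-)$ the $\x$-update in \eqref{errorADMM} becomes $\nabla f(\x^{k})+2c\Q\mathbf{r}^{k-1}+2c\W\x^{k}=c\Lb_+\z^{k-1}$. I would first pin down $\mathbf{q}^\ast$: strong convexity makes the primal optimizer $\x^\ast$ unique and consensual, so $\Lb_-\x^\ast=\zero$ and hence $\Q\x^\ast=\zero$; standard Lagrangian duality for the (feasible, affinely constrained) convex problem gives a dual optimizer which, after the change of variables defining $\mathbf{r}^k$, is of the form $2c\Q\mathbf{r}^\ast$ with $\mathbf{r}^\ast\in\mathrm{range}(\Q)$ and $\nabla f(\x^\ast)+2c\Q\mathbf{r}^\ast=\zero$. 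Subtracting this stationarity relation from the reduced $\x$-update and rearranging --- split $c\Lb_+(\z^{k-1}-\x^\ast)$ around $\x^{k}$, then use $\Lb_-\x^{k}=2\Q(\Q\x^{k})=2\Q(\mathbf{r}^{k}-\mathbf{r}^{k-1})-2\Q^2\e^{k}$ --- collapses everything to
\begin{align*}
\nabla f(\x^{k})-\nabla f(\x^\ast)+2c\Q(\mathbf{r}^{k}-\mathbf{r}^\ast)=c\Lb_+(\z^{k-1}-\z^{k})+2c\W\e^{k},
\end{align*}
which is where the error model enters and which is used twice below.

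Next I would take the inner product of this relation with $\z^{k}-\x^\ast$. Using $\Q(\z^{k}-\x^\ast)=\mathbf{r}^{k}-\mathbf{r}^{k-1}$ together with the polarization identities $2\langle a-b,a-c\rangle=\|a-c\|^2+\|a-b\|^2-\|b-c\|^2$ and $2\langle a,\Lb_+b\rangle=\|a\|_{\Lb_+}^2+\|b\|_{\Lb_+}^2-\|a-b\|_{\Lb_+}^2$, the squared $\mathbf{G}$-norms telescope and I obtain the exact one-step equality
\begin{align*}
\|\mathbf{q}^{k}-\mathbf{q}^\ast\|_{\mathbf{G}}^2+\langle\x^{k}-\x^\ast,\nabla f(\x^{k})-\nabla f(\x^\ast)\rangle+c\|\mathbf{r}^{k}-\mathbf{r}^{k-1}\|^2+\tfrac c2\|\z^{k}-\z^{k-1}\|_{\Lb_+}^2=\|\mathbf{q}^{k-1}-\mathbf{q}^\ast\|_{\mathbf{G}}^2+\langle\e^{k},\mathbf{s}\rangle .
\end{align*}
The decomposition $\z^{k}-\x^\ast=(\x^{k}-\x^\ast)+\e^{k}$ throws off a term $\langle\e^{k},\nabla f(\x^{k})-\nabla f(\x^\ast)\rangle$, and it is precisely this term that cancels against the matching piece of $2c\langle\e^{k},\W(\z^{k}-\x^\ast)\rangle$ once the reduced relation above is used to rewrite the latter; what survives of the linear-in-$\e^k$ contribution is $\mathbf{s}=c\Lb_+(\z^{k}-\z^{k-1})+2c\Q(\mathbf{r}^{k}-\mathbf{r}^\ast)+2c\W(\x^{k}-\x^\ast)$, exactly as stated.

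It then remains to prove the contraction bound $\delta\|\mathbf{q}^{k}-\mathbf{q}^\ast\|_{\mathbf{G}}^2\le\langle\x^{k}-\x^\ast,\nabla f(\x^{k})-\nabla f(\x^\ast)\rangle+c\|\mathbf{r}^{k}-\mathbf{r}^{k-1}\|^2+\tfrac c2\|\z^{k}-\z^{k-1}\|_{\Lb_+}^2+P\|\e^{k}\|_2^2$; inserting it into the one-step equality and dividing by $1+\delta$ gives the theorem. To establish it I would bound the two blocks of $\|\mathbf{q}^{k}-\mathbf{q}^\ast\|_{\mathbf{G}}^2=c\|\mathbf{r}^{k}-\mathbf{r}^\ast\|^2+\tfrac c2\|\z^{k}-\x^\ast\|_{\Lb_+}^2$ separately. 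For the $\Lb_+$ block: split $\z^{k}-\x^\ast=(\x^{k}-\x^\ast)+\e^{k}$ with Young parameter $\lambda_3$, pass to the Euclidean norm via $\|\cdot\|_{\Lb_+}^2\le\sigma_{\max}(\Lb_+)\|\cdot\|^2$, then use $v$-strong convexity, $\|\x^{k}-\x^\ast\|^2\le\tfrac1{2v}\langle\x^{k}-\x^\ast,\nabla f(\x^{k})-\nabla f(\x^\ast)\rangle$. For the $\mathbf{r}$ block: since $\mathbf{r}^{k}-\mathbf{r}^\ast\in\mathrm{range}(\Q)$, $\|\mathbf{r}^{k}-\mathbf{r}^\ast\|\le\sigma_{\min}(\Q)^{-1}\|\Q(\mathbf{r}^{k}-\mathbf{r}^\ast)\|$, so substitute the reduced relation for $2c\Q(\mathbf{r}^{k}-\mathbf{r}^\ast)$, peel off the $\W\e^{k}$ part with Young parameter $\lambda_2$ and split the gradient part from the $\Lb_+(\z^{k-1}-\z^{k})$ part with Young parameter $\lambda_1$, then bound $\|\nabla f(\x^{k})-\nabla f(\x^\ast)\|^2\le L^2\|\x^{k}-\x^\ast\|^2$ (and strong convexity again), $\|\Lb_+(\z^{k}-\z^{k-1})\|^2\le\sigma_{\max}(\Lb_+)\|\z^{k}-\z^{k-1}\|_{\Lb_+}^2$, and $\|\W\e^{k}\|^2\le\sigma_{\max}^2(\W)\|\e^{k}\|^2$. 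Taking $\delta$ equal to the stated minimum makes the coefficient of $\langle\x^{k}-\x^\ast,\nabla f(\x^{k})-\nabla f(\x^\ast)\rangle$ at most $1$ and that of $\tfrac c2\|\z^{k}-\z^{k-1}\|_{\Lb_+}^2$ at most $1$ (the nonnegative $c\|\mathbf{r}^{k}-\mathbf{r}^{k-1}\|^2$ term being simply dropped), and the residual coefficient of $\|\e^{k}\|_2^2$ is exactly $P$.

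The identity-level steps --- the fixed point, the reduced relation, the one-step equality, and the cancellation that isolates $\mathbf{s}$ --- are largely mechanical once $\Lb_-=2\Q^2$, $\W=\tfrac12(\Lb_++\Lb_-)$ and $\mathbf{r}^{k}-\mathbf{r}^{k-1}=\Q\z^{k}$ are in place. The main obstacle is the contraction step: the three Young parameters $\lambda_1,\lambda_2,\lambda_3$ and $\delta$ have to be calibrated so that \emph{every} non-error term on the right-hand side is absorbed while the error contribution collapses to a single $P\|\e^{k}\|_2^2$, i.e.\ the cascade of Young's inequalities must be arranged so that $L$, $v$, $\sigma_{\min}(\Q)$, $\sigma_{\max}(\W)$ and $\sigma_{\max}(\Lb_+)$ combine into precisely the stated $\delta$ and $P$ (the exact choice of spectral bound used on each term --- e.g.\ $\|\Lb_+ v\|^2\le\sigma_{\max}(\Lb_+)\|v\|_{\Lb_+}^2$ versus the cruder $\sigma^2_{\max}(\Lb_+)\|v\|^2$ --- controls whether powers are $\sigma$ or $\sigma^2$). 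A secondary point is to verify that every vector to which a $\sigma_{\min}$-type bound is applied genuinely lies in the relevant subspace (here $\mathbf{r}^{k}-\mathbf{r}^\ast\in\mathrm{range}(\Q)$, which holds by the construction of $\mathbf{r}^{k}$).
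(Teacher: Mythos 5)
Your proposal follows essentially the same route as the paper: your ``reduced relation'' is exactly the paper's Lemma 4 (assembled there from Lemmas 1--3 and 5), the inner product with $\x^k-\x^\ast$ (equivalently $\z^k-\x^\ast$, handling the $\e^k$ cross term) produces the same telescoping $\G$-norm identity with the same residual $\langle \e^k,\mathbf{s}\rangle$, and the contraction step is the paper's cascade of basic/Young inequalities in $\lambda_1,\lambda_2,\lambda_3$ with $\delta$ taken as the stated minimum and the error contributions collected into $P$. The only difference is bookkeeping: to land on the paper's exact formulas for $\delta$ and $P$ one must use the cruder spectral bounds (e.g.\ $\|\tfrac{\Lb_+}{2}v\|^2\le\tfrac{\sigma^2_{\max}(\Lb_+)}{4}\|v\|^2$) and handle the $\G$-weighted quantities as the paper does, a calibration issue you already flag explicitly.
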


Theorem~\ref{lemma6} shows that the sequence $\|\q^{k}-\q^\ast\|^2_\G$ converges linearly with a rate of $\frac{1}{1+\delta}$ if after a certain number of iterations, there are no data-falsification errors in the updates. Then, it can be easily shown that the sequence $\z^{k}$ or $\x^{k}$ converges to the minimizer. However, if the errors persist in the updates, this theorem shows how the errors are accumulated after each iteration. As a general result, one can further optimize over $\lambda_1$, $\lambda_2$, and $\lambda_3$ to obtain maximal $\delta$ and minimal $P$ to achieve fastest convergence and least impact from the errors.

\begin{theorem}\label{thm1}
	Choose $
	0<\beta \le \frac{b(1+\delta)\sigma^2_{\min}(\Lb_+)\left(1-\frac{1}{\lambda_4}\right)}{4b\sigma^2_{\min}(\Lb_+)\left(1-\frac{1}{\lambda_4}\right)+16\sigma^2_{\max}(\W)}
	$
	where $b>0$ and $\lambda_4>1$, 
	then
	\begin{align*}
	\|\z^{k}-\z^\ast\|^2_2\le  B^{k} \left( {A}+\sum\limits^{k}_{s=1} B^{-s}C\|\e^{s}\|^2_2\right)
	\end{align*}
	where 
    $
    {A}=\|\z^{0}-\z^\ast\|^2_2+A_2\|\mathbf r^{0}-\mathbf r^\ast\|^2_2
    $
    with
    $
    A_2=\frac{4}{(1+4\beta)\sigma^2_{\max}(\Lb_+)},
	$
    and
    $
B=\frac{(1+4\beta)\sigma^2_{\max}(\Lb_+)}{(1-b)(1+\delta-4\beta)\sigma^2_{\min}(\Lb_+)},
$
$
C=\frac{4P+2/\beta}{c^2(1-b)(1+\delta-4\beta)\sigma^2_{\min}(\Lb_+)}+\frac{b(\lambda_4-1)}{1-b}.
$
\end{theorem}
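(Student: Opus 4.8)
The plan is to convert the one-step $\mathbf G$-norm contraction of Theorem~\ref{lemma6} into a scalar geometric recursion for a Lyapunov quantity that dominates $\|\z^{k}-\z^\ast\|_2^2$, and then unroll it. Since $\mathbf G$ is block diagonal, $\|\mathbf q^{k}-\mathbf q^\ast\|_\G^2=c\|\mathbf r^{k}-\mathbf r^\ast\|_2^2+\tfrac c2\|\z^{k}-\z^\ast\|_{\Lb_+}^2$, so Theorem~\ref{lemma6}, multiplied through by $1+\delta$, becomes an inequality whose left side is $(1+\delta)\big(c\|\mathbf r^{k}-\mathbf r^\ast\|_2^2+\tfrac c2\|\z^{k}-\z^\ast\|_{\Lb_+}^2\big)$ and whose right side is $\|\mathbf q^{k-1}-\mathbf q^\ast\|_\G^2+P\|\e^{k}\|_2^2+\langle\e^{k},\mathbf s\rangle$. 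Three mismatches separate this from the claimed bound: (a) $\mathbf s$ contains the \emph{current} iterates $\z^{k},\mathbf r^{k},\x^{k}$; (b) the left side uses the $\Lb_+$-seminorm rather than $\|\cdot\|_2$; and (c) the natural state is the pair $(\mathbf r^{k},\z^{k})$, not $\z^{k}$ alone. Each is handled in turn.

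For (a), substitute $\x^{k}-\x^\ast=(\z^{k}-\z^\ast)-\e^{k}$ (using $\z^\ast=\x^\ast$) and $\z^{k}-\z^{k-1}=(\z^{k}-\z^\ast)-(\z^{k-1}-\z^\ast)$ into $\mathbf s=c\Lb_+(\z^{k}-\z^{k-1})+2c\Q(\mathbf r^{k}-\mathbf r^\ast)+2c\W(\x^{k}-\x^\ast)$ and expand $\langle\e^{k},\mathbf s\rangle$ termwise by Young's inequality, placing weight $\beta$ on the pieces proportional to $\z^{k}-\z^\ast$ and $\mathbf r^{k}-\mathbf r^\ast$ and a separate auxiliary weight tied to $b$ and $\lambda_4$ on the $2c\W(\z^{k}-\z^\ast)$ piece (whence $\sigma_{\max}^2(\W)$). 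This produces: current-iterate quadratics proportional to $\|\z^{k}-\z^\ast\|_2^2$ and to $\|\mathbf r^{k}-\mathbf r^\ast\|_2^2$, which are returned to the left side; a past-iterate quadratic proportional to $\|\z^{k-1}-\z^\ast\|_2^2$; and a noise residual proportional to $\|\e^{k}\|_2^2$ pooling every dual constant together with $P$ — the origin of the $4P+2/\beta$ numerator and the $b(\lambda_4-1)/(1-b)$ summand in $C$. For (b), pass from the $\Lb_+$-seminorm to $\|\cdot\|_2$ on the range of $\Lb_+$; combining this with the $c\Lb_+(\cdot)$ cross-term pieces, the coefficient of $\|\z^{k}-\z^\ast\|_2^2$ that survives on the left after the absorption in (a) stays strictly positive exactly when $\beta$ obeys the stated upper bound, which is where $(1+\delta-4\beta)$, $(1-b)$, and $16\sigma_{\max}^2(\W)$ enter. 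For (c), set $V^{k}:=\|\z^{k}-\z^\ast\|_2^2+A_2\|\mathbf r^{k}-\mathbf r^\ast\|_2^2$ with $A_2=\tfrac{4}{(1+4\beta)\sigma_{\max}^2(\Lb_+)}$ chosen so that the $\mathbf r^{k}$-term surviving on the left and the $\mathbf r^{k-1}$-term inside $\|\mathbf q^{k-1}-\mathbf q^\ast\|_\G^2$ assemble into $V^{k-1}$ with the same constant; dividing by the surviving coefficient of $\|\z^{k}-\z^\ast\|_2^2$ then yields exactly $V^{k}\le B\,V^{k-1}+C\|\e^{k}\|_2^2$.

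Unrolling this recursion gives $V^{k}\le B^{k}\big(V^{0}+\sum_{s=1}^{k}B^{-s}C\|\e^{s}\|_2^2\big)$, and since $V^{0}=\|\z^{0}-\z^\ast\|_2^2+A_2\|\mathbf r^{0}-\mathbf r^\ast\|_2^2=A$ and $\|\z^{k}-\z^\ast\|_2^2\le V^{k}$, this is the claim. I expect the main obstacle to be the coupled bookkeeping in (a)--(b): the Young weights ($\beta$, the auxiliary $b$-$\lambda_4$ weight, and their duals) must be balanced simultaneously so that \emph{both} current-iterate quadratics are absorbable and the leftover constants collapse to the announced $B$ and $C$ — pinning down the $\beta$-threshold exactly, not merely up to a constant, is the delicate step. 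A secondary technical point is the $\ker\Lb_+$ caveat: one must argue that the relevant iterate differences lie in (or can be projected onto) the range of $\Lb_+$, so that $\sigma_{\min}(\Lb_+)$, the smallest \emph{nonzero} eigenvalue, is the legitimate lower bound for the seminorm.
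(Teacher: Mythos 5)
Your outline is, in substance, the paper's own proof: the paper also starts from Theorem~\ref{lemma6}, Young-bounds the cross term $\langle \e^{k},\mathbf{s}\rangle$, absorbs the current-iterate quadratics into the left side, converts seminorms to eigenvalue bounds, and unrolls the two-term recursion $\|\z^{k}-\z^\ast\|^2_2+A_1\|\rb^{k}-\rb^\ast\|^2_2\le B\big(\|\z^{k-1}-\z^\ast\|^2_2+A_2\|\rb^{k-1}-\rb^\ast\|^2_2\big)+C\|\e^{k}\|^2_2$ (this is Lemma~8 of the supplement plus the final telescoping). Two bookkeeping points in your sketch differ from the paper and are exactly where the stated constants would not drop out as written. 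First, in the paper $b$ and $\lambda_4$ are \emph{not} Young weights on $\langle\e^{k},\mathbf{s}\rangle$: the $\W$-part of $\mathbf{s}$ is kept as $\W(\x^{k}-\x^\ast)$ (no substitution $\x^{k}-\x^\ast=\z^{k}-\z^\ast-\e^{k}$ inside $\mathbf{s}$), a single Young step with parameter $\beta$ followed by a four-term splitting of $\|\mathbf{s}\|^2$ leaves a term $\tfrac{4\beta}{1+\delta}\sigma^2_{\max}(\W)\|\x^{k}-\x^\ast\|^2_2$ on the right, and this term is then eliminated by spending a $b$-fraction of the left-hand $\|\z^{k}-\z^\ast\|^2_2$ coefficient through $\|\z^{k}-\z^\ast\|^2_2\ge(1-\tfrac{1}{\lambda_4})\|\x^{k}-\x^\ast\|^2_2-(\lambda_4-1)\|\e^{k}\|^2_2$; that step is precisely the origin of the stated $\beta$-threshold (the $16\sigma^2_{\max}(\W)$ term) and of the $\tfrac{b(\lambda_4-1)}{1-b}$ summand in $C$. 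Your variant, which splits off $\W\e^{k}$ and Young-weights $\W(\z^{k}-\z^\ast)$, generates an additional $\langle\e^{k},\W\e^{k}\rangle$-type noise contribution and a differently shaped threshold, so you would have to re-balance to recover exactly the announced $B$, $C$, and $\beta$-range rather than a bound of the same general form. Second, the recursion does not come with equal weights on the two sides: the left weight is $A_1=\tfrac{4}{(1-b)\sigma^2_{\min}(\Lb_+)}\neq A_2$, and the paper chains the iteration only after checking $A_1\ge A_2$ (which holds automatically); your single-weight Lyapunov $V^{k}$ with weight $A_2$ is then recoverable, but the claim that the surviving $\rb$-terms ``assemble with the same constant'' is not automatic and needs this comparison. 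A related caution: the supplement's derivations effectively use $\|\q-\q^\ast\|^2_\G=\|c(\rb-\rb^\ast)\|^2_2+\|\tfrac{c\Lb_+}{2}(\z-\z^\ast)\|^2_2$, which is what makes $c^2$ and the squared eigenvalues $\sigma^2_{\min}(\Lb_+),\sigma^2_{\max}(\Lb_+)$ appear in $B$ and $C$; with your literal reading $c\|\rb-\rb^\ast\|^2_2+\tfrac c2\|\z-\z^\ast\|^2_{\Lb_+}$ the constants come out with first powers and will not match the statement.
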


Theorem~\ref{thm1} presents a general convergence result for ADMM for decentralized consensus optimization with errors, and indicates that the erroneous update $\z^{k}$  approaches the neighborhood of the minimizer in a linear fashion. The radius of the neighborhood is given as $B^{k}\sum\limits^{k}_{s=1} B^{-s}C\|\e^{s}\|^2_2$.  Note that $B$ is not guaranteed to be less than $1$. This is very different from the convergence result of ADMM for decentralized consensus optimization \cite{shi2014linear}, which can guarantee that the update converges to the minimizer linearly fast and the corresponding rate is less than $1$. Additionally, if $\sigma^2_{\max}(\Lb_+) >> \sigma^2_{\min}(\Lb_-)$, and it ends up with $B$ being greater than $1$, then the algorithm will not converge at all. 

Thus, the first problem that follows is to guarantee that $B$ is within the range $(0,1)$, and the second one is to minimize the radius of the neighborhood by minimizing $C$.
Accordingly, we optimize over the variables that appeared in the above theorems and the algorithm parameter $c$, and give the convergence result with $B\in (0,1)$.

\begin{theorem}\label{convergence}
	If $b$ and $\lambda_2$ can be chosen, such that
	\begin{align}\label{net}
		(1-b)(1+\delta)\sigma^2_{\min}(\Lb_+)>\sigma^2_{\max}(\Lb_+) 
	\end{align}
	with 
	$
	\delta=\frac{(\lambda_2-1)}{\lambda_2}\frac{2v\sigma^2_{\min}(\Q)\sigma^2_{\min}(\Lb_+)}{L^2\sigma^2_{\min}(\Lb_+)+2v\sigma^2_{\max}(\Lb_+)},
	$
 	then the ADMM algorithm with a parameter $c=\sqrt[]{\frac{\lambda_1\lambda_2(\lambda_3-1)L^2}{\lambda_3(\lambda_2-1)\sigma^2_{\max}(\Lb_+)\sigma^2_{\min}(\Q)}}$ converges linearly with a rate of $B\in(0,1)$, to the neighborhood of the minimizer 
	where
	$
	\lambda_1=1+\frac{2v\sigma^2_{\max}(\Lb_+)}{L^2\sigma^2_{\min}(\Lb_+)},
	$
	$
	\lambda_3=1+\sqrt[]{\frac{L^2\sigma^2_{\min}(\Lb_+)+2v\sigma^2_{\max}(\Lb_+)}{\beta\lambda_1L^2v\sigma^2_{\min}(\Lb_+)}}
	$
	and
	\begin{align*}
	0<\beta\le \min\left\lbrace\frac{b(1+\delta)\sigma^2_{\min}(\Lb_+)\left(1-\frac{1}{\lambda_4}\right)}{4b\sigma^2_{\min}(\Lb_+)\left(1-\frac{1}{\lambda_4}\right)+16\sigma^2_{\max}(\W)},\right.
	\left. \frac{(1-b)(1+\delta)\sigma^2_{\min}(\Lb_+)-\sigma^2_{\max}(\Lb_+)}{4\sigma^2_{\max}(\Lb_+)+4(1-b)\sigma^2_{\min}(\Lb_+)}\right\rbrace.
	\end{align*}
	
\end{theorem}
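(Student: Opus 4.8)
The plan is to derive Theorem~\ref{convergence} as a consequence of Theorem~\ref{thm1} by making judicious choices of the still-free parameters $\lambda_1,\lambda_3,c$ and then reading off exactly when the contraction factor $B$ from Theorem~\ref{thm1} falls in $(0,1)$.

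First I would dispose of the condition $B\in(0,1)$. From Theorem~\ref{thm1}, $B=\frac{(1+4\beta)\sigma^2_{\max}(\Lb_+)}{(1-b)(1+\delta-4\beta)\sigma^2_{\min}(\Lb_+)}$, so $B<1$ is equivalent to $(1+4\beta)\sigma^2_{\max}(\Lb_+)<(1-b)(1+\delta-4\beta)\sigma^2_{\min}(\Lb_+)$; collecting the $\beta$-terms yields $\beta<\frac{(1-b)(1+\delta)\sigma^2_{\min}(\Lb_+)-\sigma^2_{\max}(\Lb_+)}{4\sigma^2_{\max}(\Lb_+)+4(1-b)\sigma^2_{\min}(\Lb_+)}$, whose right side is positive precisely when \eqref{net} holds, and which also forces $1+\delta-4\beta>0$ so that $B>0$. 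Intersecting this with the range of $\beta$ already required by Theorem~\ref{thm1} produces the two-term minimum displayed in Theorem~\ref{convergence}; any admissible $\beta$ then gives $B\in(0,1)$.

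Next I would pin down $c$, $\lambda_1$, $\lambda_3$. The $\delta$ of Theorem~\ref{lemma6} is a minimum of two expressions; I would first choose $c$ to balance the two summands in the denominator of the second expression, i.e.\ impose $\lambda_1\lambda_2(\lambda_3-1)L^2=c^2\lambda_3(\lambda_2-1)\sigma^2_{\max}(\Lb_+)\sigma^2_{\min}(\Q)$, which is exactly the stated $c=\sqrt{\lambda_1\lambda_2(\lambda_3-1)L^2/[\lambda_3(\lambda_2-1)\sigma^2_{\max}(\Lb_+)\sigma^2_{\min}(\Q)]}$ and collapses the second term to $2v(\lambda_2-1)\sigma^2_{\min}(\Q)/(\lambda_1\lambda_2 L^2)$. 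I would then choose $\lambda_1$ to equate the first and the simplified second term; cancelling the common factor $(\lambda_2-1)\sigma^2_{\min}(\Q)/(\lambda_1\lambda_2)$ leaves $(\lambda_1-1)\sigma^2_{\min}(\Lb_+)/\sigma^2_{\max}(\Lb_+)=2v/L^2$, i.e.\ $\lambda_1=1+2v\sigma^2_{\max}(\Lb_+)/(L^2\sigma^2_{\min}(\Lb_+))$. Substituting this back into the common value of the minimum yields the closed form $\delta=\frac{\lambda_2-1}{\lambda_2}\frac{2v\sigma^2_{\min}(\Q)\sigma^2_{\min}(\Lb_+)}{L^2\sigma^2_{\min}(\Lb_+)+2v\sigma^2_{\max}(\Lb_+)}$ claimed in the theorem. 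Note the dependency chain $\lambda_2\to\delta$, then $\beta$ (it involves only $b,\lambda_4,\delta$ and network constants), then $\lambda_3$, then $c$ is acyclic, so this system of choices is consistent.

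Finally, $\lambda_3$ is still free and is used to shrink the neighborhood radius $C$. I would substitute the chosen $c^2$ (which carries a factor $(\lambda_3-1)/\lambda_3$), together with $\delta$ and $P=\frac{c^2\delta\lambda_2\sigma^2_{\max}(\W)}{\sigma^2_{\min}(\Q)}+\frac{c^2\delta\lambda_3\sigma^2_{\max}(\Lb_+)}{4}$, into $C=\frac{4P+2/\beta}{c^2(1-b)(1+\delta-4\beta)\sigma^2_{\min}(\Lb_+)}+\frac{b(\lambda_4-1)}{1-b}$; the $\lambda_3$-dependence then separates into one term increasing in $\lambda_3$ (the $\lambda_3\sigma^2_{\max}(\Lb_+)$ piece of $P$) and one proportional to $\lambda_3/(\lambda_3-1)$ and hence decreasing (the $2/\beta$ term divided by $c^2$). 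Setting the derivative of this convex-in-$\lambda_3$ sum to zero gives $\lambda_3=1+\sqrt{\frac{L^2\sigma^2_{\min}(\Lb_+)+2v\sigma^2_{\max}(\Lb_+)}{\beta\lambda_1 L^2 v\sigma^2_{\min}(\Lb_+)}}$, matching the statement, and one checks this value exceeds $1$. Combining everything: under \eqref{net} all the parameters are well defined, $B\in(0,1)$, and Theorem~\ref{thm1} then delivers linear convergence to the neighborhood of the minimizer. The main obstacle I anticipate is the bookkeeping in this last step: after substituting $c^2$, $\delta$ and $P$, isolating the precise $\lambda_3$-dependence of $C$ and verifying that the stationary point simplifies to exactly the stated radical (and is compatible with $\lambda_3>1$ and with the $\beta$ bound used to define it) requires careful algebra rather than any new idea.
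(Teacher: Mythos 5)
Your proposal is correct and follows essentially the same route as the paper: choose $c$ to balance the two denominator terms in the second expression of $\delta$, choose $\lambda_1$ to equalize the two terms of the min (giving the closed-form $\delta$), take the $\beta$ bound (intersected with the bound from Theorem~\ref{thm1}) to force $B\in(0,1)$ under \eqref{net}, and pick $\lambda_3$ as the minimizer of the $\lambda_3$-dependent part of $C$, which yields exactly the stated radical. The only difference is cosmetic: the paper carries out the $B\in(0,1)$ bookkeeping inside the proof of Theorem~\ref{thm1} and merely cites it here, whereas you redo it explicitly.
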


Theorem~\ref{convergence} provides an optimal set of choices of variables and the algorithm parameter such that $B \in (0,1)$ and $C$ is minimized in Theorem~\ref{thm1}. 
Recalling condition \eqref{net}, it is equivalent to
\begin{align}\label{condition_strong_convex}
\frac{\sigma^2_{\min}(\Lb_+)}{\sigma^2_{\max}(\Lb_+)} > \frac{4v}{\sqrt{(L^2+2v)^2+16v^2\frac{\lambda_2-1}{\lambda_2}\sigma^2_{\min}(\Q)}-L^2+2v}.
\end{align}
As the only condition for the convergence, we show in our experiments that it can be easily satisfied.


\begin{remark}
The value of $\frac{\sigma^2_{\min}(\Lb_+)}{\sigma^2_{\max}(\Lb_+)} $, which corresponds to the network structure, has to be greater than a certain threshold such that $B\in(0,1)$ can be achieved. This shows that a decentralized network with a random structure may not converge at all to the neighborhood of the minimizer, in the presence of errors in iteration. 
\end{remark}
\begin{remark}
The right hand side of inequality \eqref{condition_strong_convex} is upper bounded by $\frac{4v}{(\sqrt{2}-1)L^2+(2\sqrt{2}+2)v}$, which depends on the geometric properties of the cost function. There exists a certain class of cost functions (e.g., $v$ is small, $L$ is large), such that a more flexible network structure design is allowed for a linear convergence to the neighborhood of the minimizer.
\end{remark}

\begin{corollary}\label{strongly_convex_corollary}
When \eqref{condition_strong_convex} is satisfied, the first condition below achieves linear convergence to the neighborhood of the minimizer with a radius of $\frac{Ce}{1-B}$, and either of the last two conditions guarantees linear convergence to the minimizer
\begin{itemize}
\item 
 $\|\e^{k-1}\|^2_2 \le e$
\item 
$\|\e^k\|^2_2$ decreases linearly at a rate $R$ such that $0<R< B$
\item 
$ C\|\e^{k}\|^2_2 \le B(A_1-A_2)\|\mathbf r^{k-1}-\mathbf r^\ast\|^2_2$ with $A_1=\frac{4}{(1-b)\sigma^2_{\min}(\Lb_+)}$
\end{itemize}
\end{corollary}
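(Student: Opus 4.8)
The plan is to derive all three parts of the corollary from the master estimate of Theorem~\ref{thm1}, which after distributing $B^{k}$ reads
\[
\|\z^{k}-\z^\ast\|^2_2 \;\le\; B^{k}A+\sum_{s=1}^{k}B^{k-s}\,C\,\|\e^{s}\|^2_2,
\]
together with the fact that $B\in(0,1)$ under \eqref{condition_strong_convex} and the parameter choices of Theorem~\ref{convergence}. The structural remark I would make first is that the accumulated-error term is a geometric convolution of the error-energy sequence $\{\|\e^{s}\|^2_2\}$ with the kernel $B^{k-s}$, so its limiting behavior is governed entirely by the decay of $\|\e^{s}\|^2_2$ measured against $B$.

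For the first item I would assume the uniform bound $\|\e^{s}\|^2_2\le e$ and estimate $\sum_{s=1}^{k}B^{k-s}C\|\e^{s}\|^2_2\le Ce\sum_{j=0}^{k-1}B^{j}\le Ce/(1-B)$; the master estimate then gives $\|\z^{k}-\z^\ast\|^2_2\le B^{k}A+Ce/(1-B)$, and since $B^{k}A\to0$ at the geometric rate $B$ this is exactly linear convergence into the ball of radius $Ce/(1-B)$. For the second item, from $\|\e^{k}\|^2_2\le R\|\e^{k-1}\|^2_2$ with $0<R<B$ I get $\|\e^{s}\|^2_2\le R^{s-1}\|\e^{1}\|^2_2$, so the convolution is at most $C\|\e^{1}\|^2_2\,B^{k-1}\sum_{j\ge0}(R/B)^{j}=C\|\e^{1}\|^2_2\,B^{k}/(B-R)$ because the ratio $R/B<1$ makes the series summable; hence $\|\z^{k}-\z^\ast\|^2_2\le B^{k}\bigl(A+C\|\e^{1}\|^2_2/(B-R)\bigr)\to0$ at rate $B$, and since $\e^{k}\to0$ as well, $\x^{k}\to\x^\ast$.

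The third item is where the argument is delicate, and I would not invoke the unrolled estimate but rather the one-step inequality that produces Theorem~\ref{thm1} (reconstructed from that proof, using the auxiliary sequence $\mathbf r^{k}$ and the optimum $\mathbf r^\ast$ from Theorem~\ref{lemma6}, and the elementary fact $A_1\ge A_2$), which has the form
\[
\|\z^{k}-\z^\ast\|^2_2+A_1\|\mathbf r^{k}-\mathbf r^\ast\|^2_2 \;\le\; B\bigl(\|\z^{k-1}-\z^\ast\|^2_2+A_2\|\mathbf r^{k-1}-\mathbf r^\ast\|^2_2\bigr)+C\|\e^{k}\|^2_2 .
\]
Substituting the hypothesis $C\|\e^{k}\|^2_2\le B(A_1-A_2)\|\mathbf r^{k-1}-\mathbf r^\ast\|^2_2$ absorbs the error term exactly into the slack between the two weightings, leaving the undamped contraction $W_{k}\le B\,W_{k-1}$ for $W_{k}:=\|\z^{k}-\z^\ast\|^2_2+A_1\|\mathbf r^{k}-\mathbf r^\ast\|^2_2$; iterating gives $W_{k}\le B^{k}W_{0}\to0$, hence $\|\z^{k}-\z^\ast\|^2_2\le W_{k}\to0$ linearly at rate $B$ and $\x^{k}\to\x^\ast$.

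The main obstacle is the third item: it requires pulling the precise one-step inequality and the exact constants $A_1,A_2$ out of the proof of Theorem~\ref{thm1} (rather than its unrolled consequence) and checking that the slack $B(A_1-A_2)\|\mathbf r^{k-1}-\mathbf r^\ast\|^2_2$ is exactly the quantity against which the error budget in the third condition is calibrated; the first two items are routine geometric-series bookkeeping once the convolution structure is recognized.
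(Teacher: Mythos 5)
Your proposal is correct and matches the paper's own proof essentially step for step: the first two items are handled by the same geometric-series bounds on the unrolled estimate of Theorem~\ref{thm1}, and the third item is proved, exactly as in the paper, by returning to the one-step inequality $\|\z^{k}-\z^\ast\|^2_2+A_1\|\mathbf r^{k}-\mathbf r^\ast\|^2_2\le B(\|\z^{k-1}-\z^\ast\|^2_2+A_2\|\mathbf r^{k-1}-\mathbf r^\ast\|^2_2)+C\|\e^{k}\|^2_2$ and absorbing the error into the slack $B(A_1-A_2)\|\mathbf r^{k-1}-\mathbf r^\ast\|^2_2$ before iterating the resulting contraction. The only differences are cosmetic indexing choices (starting the error sequence at $\e^{1}$ versus $\e^{0}$), which do not affect the argument.
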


The first result in Corollary \ref{strongly_convex_corollary} simply states that if the error at every iteration is bounded, then the algorithm will approach the bounded neighborhood of the minimizer, and the second result states that if the error in the update decays faster than the distance between the update and the minimizer $\|\z^k-\z^\ast\|^2_2$, then the algorithm will reach the minimizer at a linear rate. The third result provides a much more general condition for convergence to the minimizer, 
which gives an upper bound for the current error based on the past errors, such that the network can tolerate the accumulated errors and the convergence to the minimizer can still be guaranteed.

\section{Robust Decentralized ADMM Algorithm (\textsf{ROAD})}
\label{sec:implementation}
Based upon insights provided by our theoretical results in Section~\ref{sec: result}, we investigate the design of the robust ADMM algorithm which can tolerate the errors in the ADMM updates. We focus on the scenario where a fraction of the agents generate erroneous updates. The remaining agents in the network follow the protocol and generate true updates, which are referred to as reliable agents\footnote{We also assume that reliable neighbors are in a majority for each agent $i$ in the network.} in this paper. We refer to our proposed robust ADMM algorithm as ``\textsf{ROAD}'' (Algorithm $1$).
\begin{algorithm}\label{alg}
	\caption{\textsf{ROAD}($\x^0,c,\alpha^0,T,U$)}
	\begin{algorithmic}[1]
		\Function{$f=\sum\limits_{i=1}^D f_i(\x)$}{}      	
		\State Initialization: $\x^0=0$, $c$, $\alpha^0=0$, $T$, $U$
		
		\For{$k = 1$ to $T$}
		\State For the node $i$ :
		\If {$\sum_{t=1}^k\|\x_i^t-\x_j^t\|>U$, $j\in \mathcal{N}_i$,}
		\State Replace $\x_j^k$ with $\x_i^k$ {{in current update \eqref{shi_dec_up}}}
        \Else
        \State Use $\x_j^k$ {{in current update \eqref{shi_dec_up}}}        
		\EndIf
		\EndFor
		\State Output $\x^T$
		\EndFunction
	\end{algorithmic}
\end{algorithm}

{{To explain the idea behind ROAD, let us define two crucial variables used in the algorithm: $I)$ deviation statistics $Z(k)=\sum_{t=1}^k\|\Q\z^t\|$, and $II)$ threshold {$U=\left( \sigma_{\max}(\Lb_+)V^2_1+\frac{2V_2^2}{\sigma_{\min}(\Lb_-)c^2}+4\right)/2\sqrt{2}$}. 
The deviation statistics accumulates agents' update deviation from each other over ADMM iterations. 
Next, we obtain an upper bound on the deviation statistics for the error-free case. Specifically, if there were no errors in the updates from the neighbors, we show in Lemma $8$ (in supplementary materials) that $Z(k) \le U/\sqrt{2}$. This upper bound $U$ serves as a threshold to identify unreliable agents. Note that $Z(k)=\frac{1}{\sqrt[]{2}}\sum_{t=1}^k{\sum_{(i,j) \in \mathcal{V}}\|\z_i^k-\z_j^k\|}$, thus, we have $\frac{1}{\sqrt[]{2}}\sum_{t=1}^k{\|\z_i^k-\z_j^k\|}\leq Z(k)\le U/\sqrt{2},\;\forall (i,j)\in \mathcal{V}$. Inspired by this relationship, each agent $i$ maintains the local deviation statistics $\sum_{t=1}^k{\|\z_i^k-\z_j^k\|}$ for every neighboring agent $j \in \mathcal{N}_i$ and compares it with the threshold $U$ to identify if neighboring agent $j$ is providing erroneous updates. For a reliable node $j$, the statistic $\sum_{t=1}^k\|\z_i^t-\z_j^t\|$ will not exceed the threshold $U$. If the statistic $\sum_{t=1}^k\|\z_i^t-\z_j^t\|$ exceeds the threshold $U$, the neighboring agent $j$ is labeled as unreliable and its update is not be used by agent $i$. To avoid network disconnection in the case of unreliable neighbors, the link $\{i,j\}$ would not be cut off, however, the update from $j$ will be replaced by node $i$'s own value. Next, we show in Theorem \ref{radmm} that the proposed \textsf{ROAD} algorithm converges to the optimum at a rate of $\mathcal O(1/T)$.

\begin{theorem}\label{radmm}
For convex function $f(\x)$, there exists $\mathbf p=\begin{bmatrix}
\mathbf{r}\\ 
\x^\ast
\end{bmatrix}$ with $\rb=0$, and \textsf{ROAD} provides
\begin{align}\label{radmm_upper}
	f(\hat{\x}_T)-f(\x^\ast) \le \frac{1}{T}\left( \|\p^0-\p\|^2_\G+8c\frac{\sigma^2_{\max}(\Lb_+)}{\sigma^2_{\min}(\Lb_-)}E^2U^2\right)
\end{align}
where $\hat{\x}_T=\sum_{k=1}^T \x^k/T$, and
$U=\left( \sigma_{\max}(\Lb_+)V^2_1+\frac{2V_2^2}{\sigma_{\min}(\Lb_-)c^2}+4\right)/2\sqrt{2}$.
\end{theorem}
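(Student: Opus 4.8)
The plan is to run the standard ADMM convergence telescoping argument from Theorem~\ref{convex}, but to exploit the fact that under \textsf{ROAD} the effective errors injected at each iteration are no longer arbitrary: they are controlled by the threshold mechanism. Concretely, I would start from the per-iteration bound underlying \eqref{convex_second}, namely the inequality that produces
\begin{align*}
\frac{\sum_{k=1}^T f(\x^k)}{T}-f(\x^\ast) \le \frac{\|\p^0-\p\|^2_\G}{T}+\frac{c}{T}\sum_{k=1}^T \left(\frac{\sigma^2_{\max}(\Lb_+)}{2\sigma_{\min}(\Lb_-)} \|\e^{k}\|^2_2+\langle \e^{k},2\Q\rb^{k}\rangle \right),
\end{align*}
and then reinterpret the term $c\Lb_+\z^k$ appearing in the $\x$-update \eqref{errorADMM}. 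When agent $i$ rejects a neighbor $j$'s update and substitutes its own value, this is algebraically equivalent to the error-free update with a modified ``error'' $\e^k$ whose block structure is $\e_i^k = \sum_{j \in \mathcal{N}_i^{\mathrm{bad}}}(\x_i^k - \x_j^k)$ — i.e., the substituted contribution differs from the honest contribution exactly by the per-edge deviations $\x_i^k - \x_j^k$ over the (at most $E$) edges that were flagged. So the first step is to write $\e^k$ in this edge-decomposed form and bound $\|\Q\e^k\|$ or $\|\e^k\|_{\Lb_+}$-type quantities by sums of $\|\x_i^k-\x_j^k\|$ over flagged edges.

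The second step is to invoke the threshold: by construction, agent $i$ only uses $\x_j^k$ while $\sum_{t=1}^k\|\x_i^t-\x_j^t\| \le U$, and the moment this statistic would exceed $U$ the link is replaced. Combined with Lemma~8 (the error-free bound $Z(k)\le U/\sqrt2$, hence $\frac{1}{\sqrt2}\sum_{t=1}^k\|\z_i^t-\z_j^t\|\le U/\sqrt2$ for honest edges), this means that for \emph{every} edge, flagged or not, the accumulated deviation $\sum_{t=1}^k\|\x_i^t-\x_j^t\|$ stays $O(U)$ for all $k$. The key consequence I want is a bound on $\sum_{k=1}^T \|\e^k\|_2$ or on $\sum_{k=1}^T\langle\e^k,2\Q\rb^k\rangle$ of the form $O(E^2U^2)$: since $\rb^k = \sum_{s\le k}\Q\z^s$ telescopes and each $\z^s$ deviation is controlled, and since $\|\e^k\|$ is a sum over at most $E$ edge-deviations each of which sums (over $k$) to at most $\sqrt2 U$, a Cauchy–Schwarz / telescoping manipulation should collapse $\frac{c}{T}\sum_k(\cdots)$ into $\frac{1}{T}\cdot 8c\frac{\sigma^2_{\max}(\Lb_+)}{\sigma^2_{\min}(\Lb_-)}E^2U^2$. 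The explicit value of $U$ in terms of $V_1, V_2, \sigma_{\max}(\Lb_+), \sigma_{\min}(\Lb_-), c$ then enters only through Lemma~8, which is where the boundedness assumptions $\|\x\|\le V_1$, $\|\nabla f\|\le V_2$ are consumed. Finally, convexity of $f$ gives $f(\hat\x_T) = f(\frac1T\sum_k\x^k)\le \frac1T\sum_k f(\x^k)$ by Jensen, converting the averaged-iterate bound into the stated bound on $f(\hat\x_T)-f(\x^\ast)$.

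\textbf{Main obstacle.} The delicate part is not the telescoping but getting the edge-deviation bookkeeping to produce exactly the constant $8c\,\sigma^2_{\max}(\Lb_+)/\sigma^2_{\min}(\Lb_-)\,E^2U^2$ rather than something looser: one must carefully track how the $\Q$-weighting ($\Q^2 = \Lb_-/2$) and the factor $\sigma^2_{\max}(\Lb_+)/\sigma_{\min}(\Lb_-)$ from the $\|\e^k\|^2$ term in \eqref{convex_second} combine with the number of edges $E$ and with the fact that each flagged edge's running deviation is capped at $O(U)$ \emph{cumulatively} (not per-step). In particular the cross term $\langle\e^k,2\Q\rb^k\rangle$ must be handled by observing that $\rb^k$ itself is a cumulative sum of controlled $\Q\z^s$ terms, so that $\sum_k\langle\e^k,2\Q\rb^k\rangle$ is essentially $\|\sum_k\Q\e^k\|^2$-like after re-summation; bounding this by $(EU)^2$-scale quantities and absorbing it, together with the quadratic term, into a single $8E^2U^2$ coefficient is the real work. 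A secondary subtlety is verifying that the \textsf{ROAD} substitution rule indeed realizes an update of the exact form \eqref{errorADMM} with the claimed $\e^k$ — i.e., that replacing $\x_j^k$ by $\x_i^k$ in the neighbor sum is captured by adding $c\Lb_+$ times an appropriate perturbation — which requires unpacking the block structure of $\Lb_+$ relative to $\mathcal{N}_i$.
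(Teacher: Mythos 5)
Your plan follows essentially the paper's own route: start from the per-iteration convex bound behind \eqref{convex_second}, use the ROAD threshold to cap the cumulative $\Q$-weighted deviations so that $\sum_t\|\Q\e^t\|=O(EU)$ (with $U$ supplied by the error-free bound of Lemma \ref{U}), control the cross term through $\rb^k=\sum_{s\le k}\Q\z^s$ and Cauchy--Schwarz, and finish with Jensen. The only difference is bookkeeping: the paper gets $\sum_t\|\Q\e^t\|\le 2\sqrt{2}EU$ directly from $\Q\e^t=\Q\z^t-\Q\x^t$ with each cumulative sum bounded by $\sqrt{2}EU$ via the per-arc statistics (which is precisely where the constant $8=(2\sqrt{2})^2$ arises), rather than your per-edge decomposition of the replaced contributions, so your approach is the same in substance.
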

{{Theorem $5$ shows that the \textsf{ROAD} achieves a sub-linear convergence rate of $\mathcal O(1/T)$. Note that to account for the thresholding operation in ROAD, the upper bound in $(10)$ introduces an additional term $8c\frac{\sigma^2_{\max}(\Lb_+)}{\sigma^2_{\min}(\Lb_-)}E^2U^2$.}} 
{{ROAD still falls under the formulation in (4) 
and follows the general analysis framework considered in Section 3. Thus, Theorem~\ref{radmm} also connects with the results in Theorem \ref{convex}, Theorem \ref{lemma6} and Corollary \ref{strongly_convex_corollary}. 
In the next section, we will also show empirically that employing the algorithmic parameter $c$ derived in Theorem $4$ accelerates the convergence rate of ROAD.}}
\section{Experiments}\label{sec:exp}
 In this section, we use \textsf{ROAD} to solve two different decentralized consensus optimization problems
 with $D=10$ agents.
We provide the network topology for the experiments in supplementary materials (Figure $2$). We assume that there are $3$ unreliable agents (chosen randomly) in the network. Unreliable agents introduce errors in their updates by adding Gaussian noise\footnote{{{Note that our theoretical analysis and the proposed mitigation scheme (\textsf{ROAD}) does not assume the error to be of any parametric structure and are applicable to any arbitrary type of error.}}} with mean $\mu_b$ and variance $\sigma_b^2$.

\begin{figure}%
	\centering
	\subfloat[]{{\includegraphics[width=6cm]{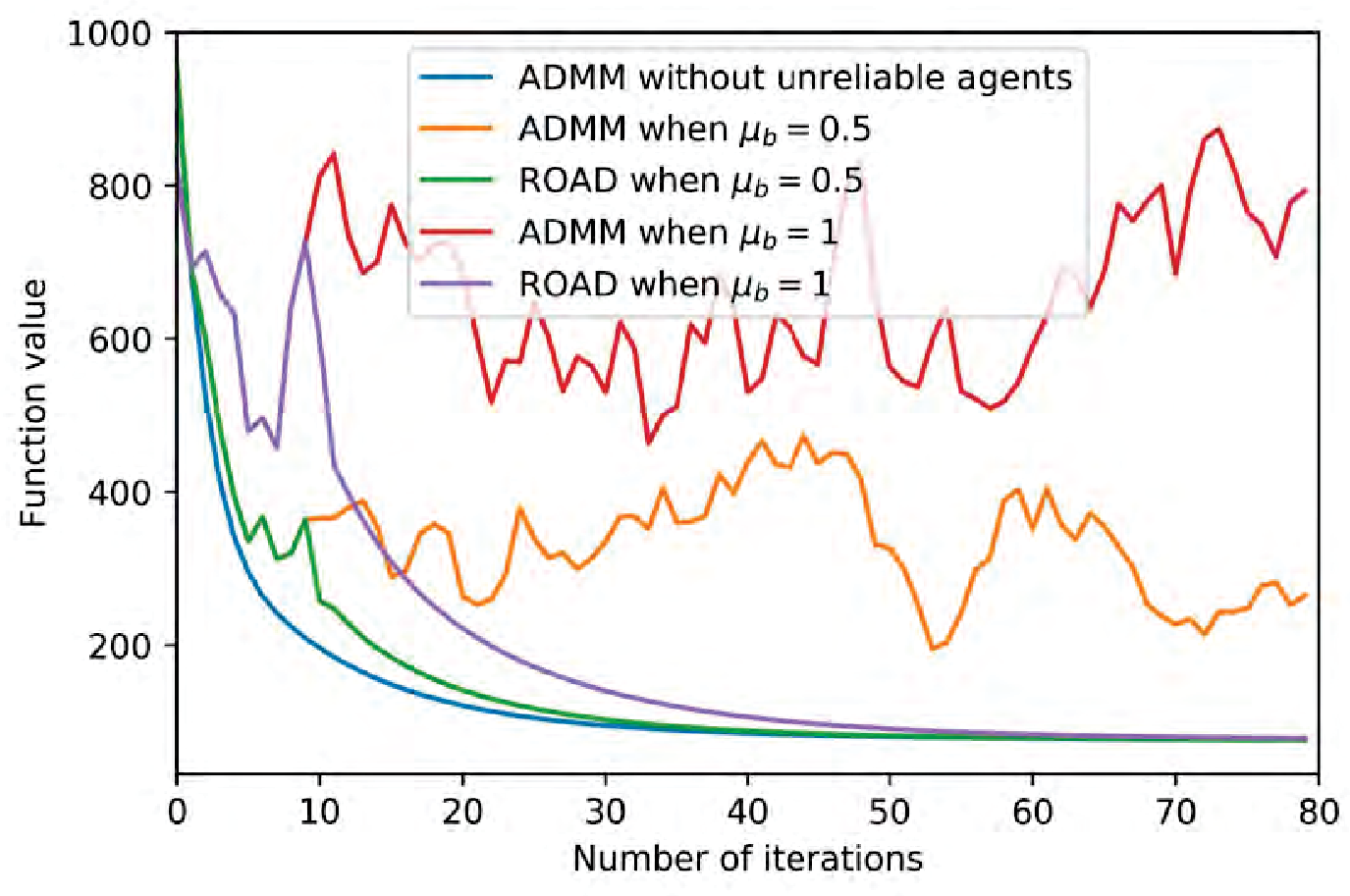} }}%
	\qquad
	\subfloat[]{{\includegraphics[width=6cm]{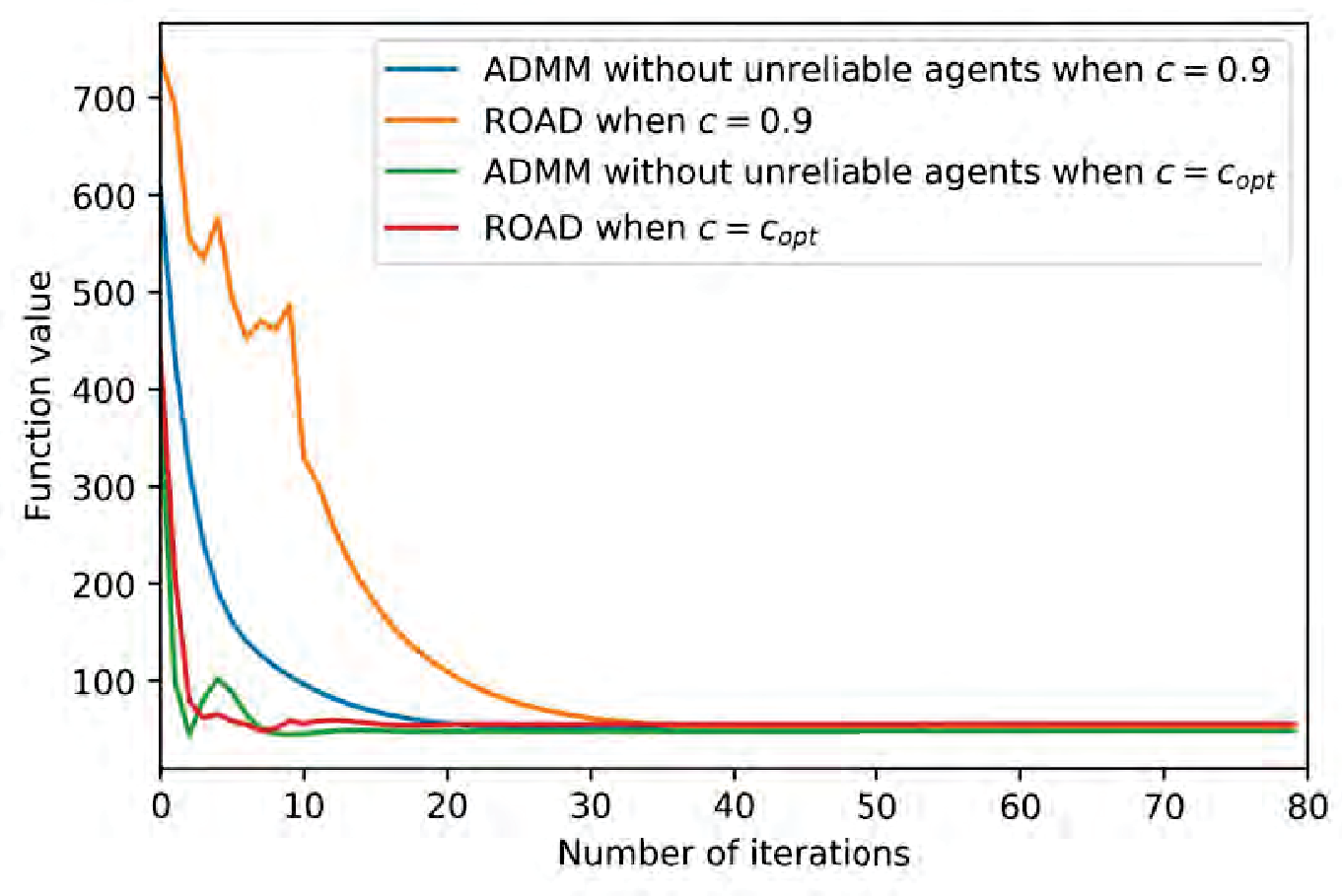} }}%
	\caption{(a) Performance comparison with different noise intensities. (b) Performance comparison with different choices of algorithm parameter. }%
	\label{fig1}%
\end{figure}
\begin{figure}
	\centering
	\subfloat[]{{\includegraphics[width=6cm]{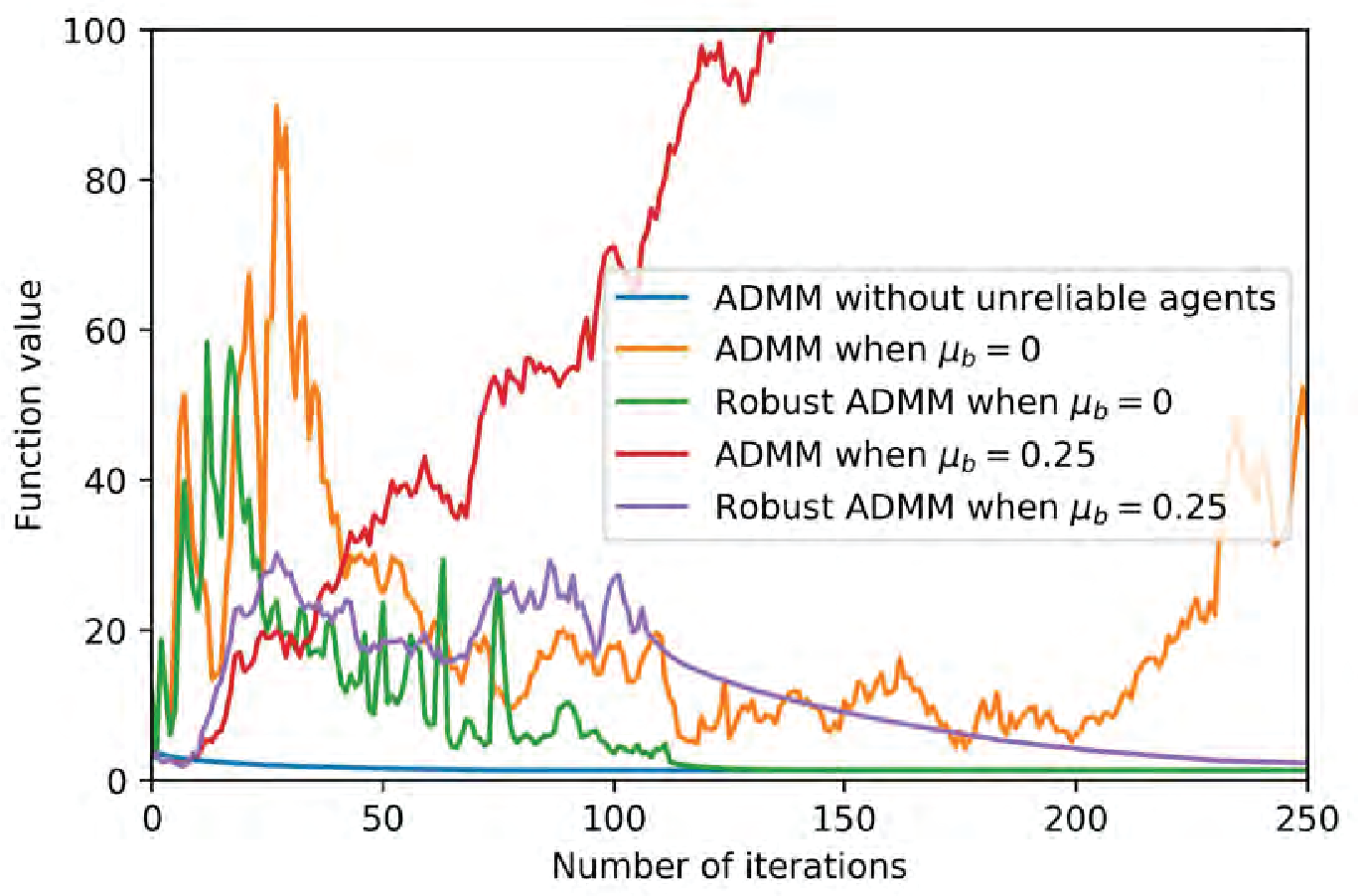} }}%
	\qquad
	\subfloat[]{{\includegraphics[width=6cm]{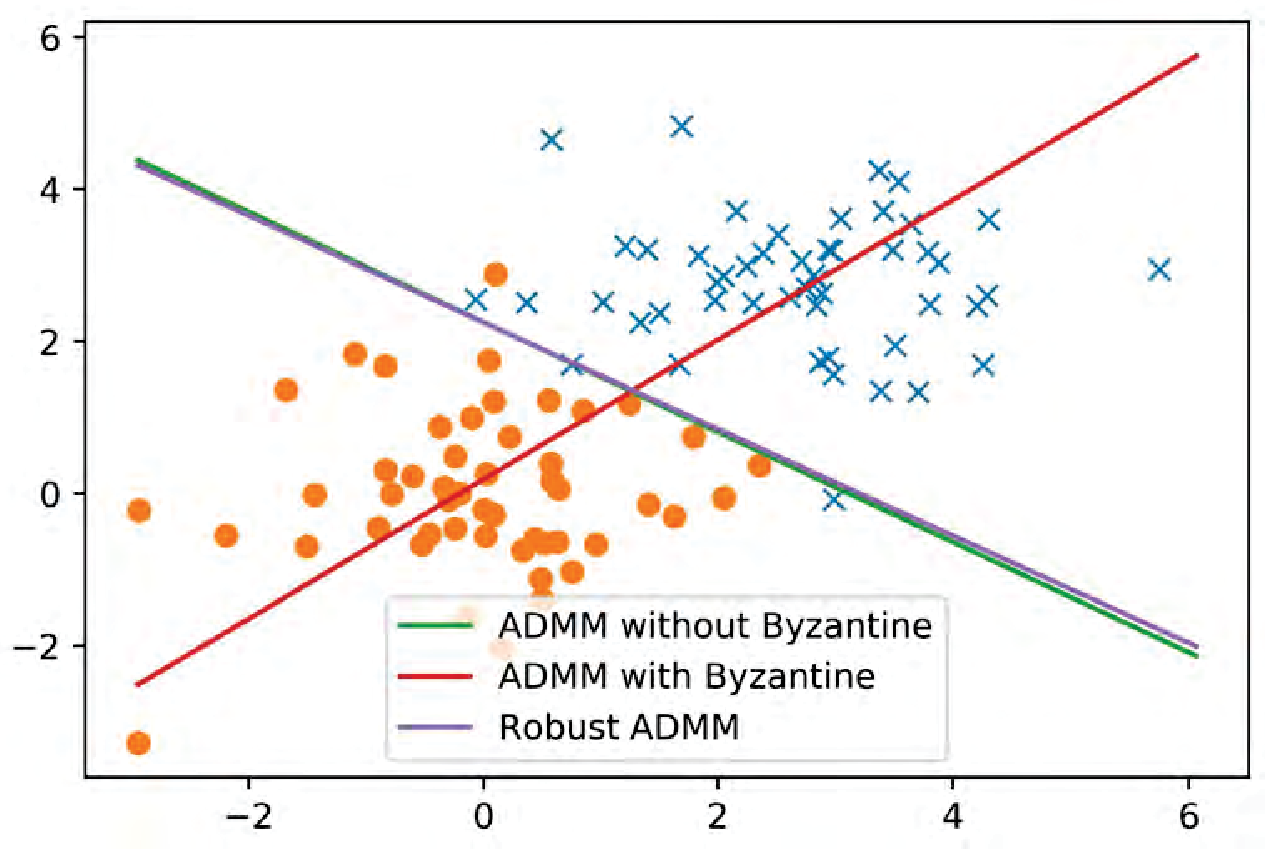} }}%
	\caption{(a) Performance comparison with different noise intensities. (b) Classification with unreliable agents.}%
	\label{fig2}%
\end{figure}

\subsection{Decentralized Regression}

First, we present the experimental results for a decentralized linear regression problem. The algorithm is deployed to minimize the following mean square error, 
\[\min_{\x=\tilde{\x} \in \RR^3}\sum_{i=1}^{D}\frac{1}{2}\|\y_i-\B_i\x\|^2_2.\]
For comparison, we use the same experiment setting as that in \cite{shi2014linear}. Here $\x \in \RR^3$ is the parameter to be estimated and it is generated by normal distribution $\mathcal N (\mathbf 0, \I)$, $\B_i\in \RR^{3\times3}$ is the measurement matrix of node $i$ and its elements follow $\mathcal N(0,1)$, and $\y_i \in \RR^3$ is the linear measurement vector, which is, however, corrupted by Gaussian noise $\mathcal N (\mathbf 0, \I)$. Note that the cost function is strongly convex and $L$-smooth, and we find that condition \eqref{condition_strong_convex} is satisfied by the network.
We record the cost function value over different iterations.

In Figure $1$(a), we compare the performance of original ADMM~\cite{shi2014linear} with \textsf{ROAD} in the presence of unreliable agents. 
We set the noise variance for unreliable agents as $\sigma_b^2=1.5^2$ and give results for different noise intensities, i.e., $\mu_b$. We can see that if there are no unreliable agents in the network, the ADMM converges quickly to the minimizer. However, in the presence of unreliable agents, with $\mu_b=0.5$ and $\mu_b=1$, it can be seen that the performance of the original ADMM degrades significantly. We observe that original ADMM approaches a neighborhood of the minimizer whose size depends on the intensities ($\mu_b$) of the noise.
On the other hand, \textsf{ROAD} achieves a comparable convergence speed as ADMM without error.


Next, we employ the derived optimal choice of the algorithm parameter $c$ and show the performance comparison. The optimal $c$, which is termed as $c_{opt}$, is given in Theorem \ref{convergence}. We compare the performance of the \textsf{ROAD} in the cases where $c=0.9$ and $c=c_{opt}$. We can see clearly from Figure $1$(b) that with the optimal $c$, \textsf{ROAD} achieves a much faster convergence speed. Even though the optimal algorithm parameter is derived for the situation where there are unreliable nodes, the original ADMM can also obtain an acceleration with the optimal $c$.

\subsection{Decentralized Classification}

Consider a binary classification problem with a support vector machine, and the local cost function is 
\[f_i(\wb_i,b_i)=\frac{1}{2}\|\wb_i\|^2_2+C\sum_{j=1}^{N}\max(0,1-y_j(\wb_i^T\x_j+b_i)).\]
Here, the training set with $N=1000$ sample points is equally partitioned across $10$ agents. 
For each training point $\{\x_j,y_j\}$, $\x_j\in \RR^2$ is the feature vector, and $y_j\in \{-1,1\}$ is the corresponding label. We assume that $\x_j$ follows a normal distribution $\mathcal N([2.8,2.8]^T, \I)$ when $y_j=1$, and $\mathcal{N}(\mathbf{0},\I)$ when $y_j=-1$, respectively. Locally, the training data is evenly composed of samples from two different distributions. In our experiment, each agent updates
$\{\wb,b\}$, and the whole network tries to reach a final consensus on a globally optimal solution.
We choose the regularization parameter $c=0.35$ in our experiment.
 We model the error injected by unreliable agents with distribution $\mathcal{N}(0,1.5^2)$. 

In Figure $2$(a), we present the objective function value against the number of iterations for different algorithms. We observe that in the absence of unreliable agents, the original ADMM algorithm converges quickly and there are no function value fluctuations. When unreliable agents provide erroneous updates, ADMM algorithm diverges from the minimizer significantly. 
 We can see that when the noise intensity $\mu_b$ is larger, the size of the neighborhood is larger. On the other hand, when \textsf{ROAD} is employed, we observe that the algorithm converges to the minimizer which corroborates our theoretical results in Theorem $5$.

We show the classification results by depicting the hyperplane ($\wb^T \x+b=0$) in Figure $2$(b). When there are unreliable agents, the algorithm learns an ``incorrect'' classifier as is shown by the red line. By using \textsf{ROAD}, we obtain a classifier which is almost the same as the case where there are no unreliable agents. The slight difference arises because the algorithms stop after the same number of iterations in our experiments, thus, \textsf{ROAD} does not achieve the same accuracy as error-free ADMM. 

\section{Conclusion}\label{sec: conclude}
We considered the problem of decentralized learning using ADMM in the presence of unreliable agents. We studied the convergence behavior of the decentralized ADMM algorithm and showed that the ADMM converges to a neighborhood of the solution under certain conditions. We suggested guidelines for network structure design to achieve faster convergence. We also gave several conditions on the errors to obtain exact convergence to the solution. A robust variant of the ADMM algorithm was proposed to enable decentralized learning in the presence of unreliable agents and its convergence to the optima was proved. We also provided experimental results to validate the analysis and showed the effectiveness of the proposed robust scheme. We assumed the convexity of the cost function, and one might follow our lines of analysis for non-convex functions. Extension of the analysis and the algorithm to an asynchronous setting can also be considered.

\medskip
\clearpage

\bibliography{ref}
\bibliographystyle{plain}

\newpage\onecolumn
\appendix

\onecolumn
\begin{center}
	\Large \textbf{Supplementary Materials}
\end{center}

\begin{figure}[h]
	\label{fig:topol}
	\centering
	\includegraphics[width=0.45\textwidth]{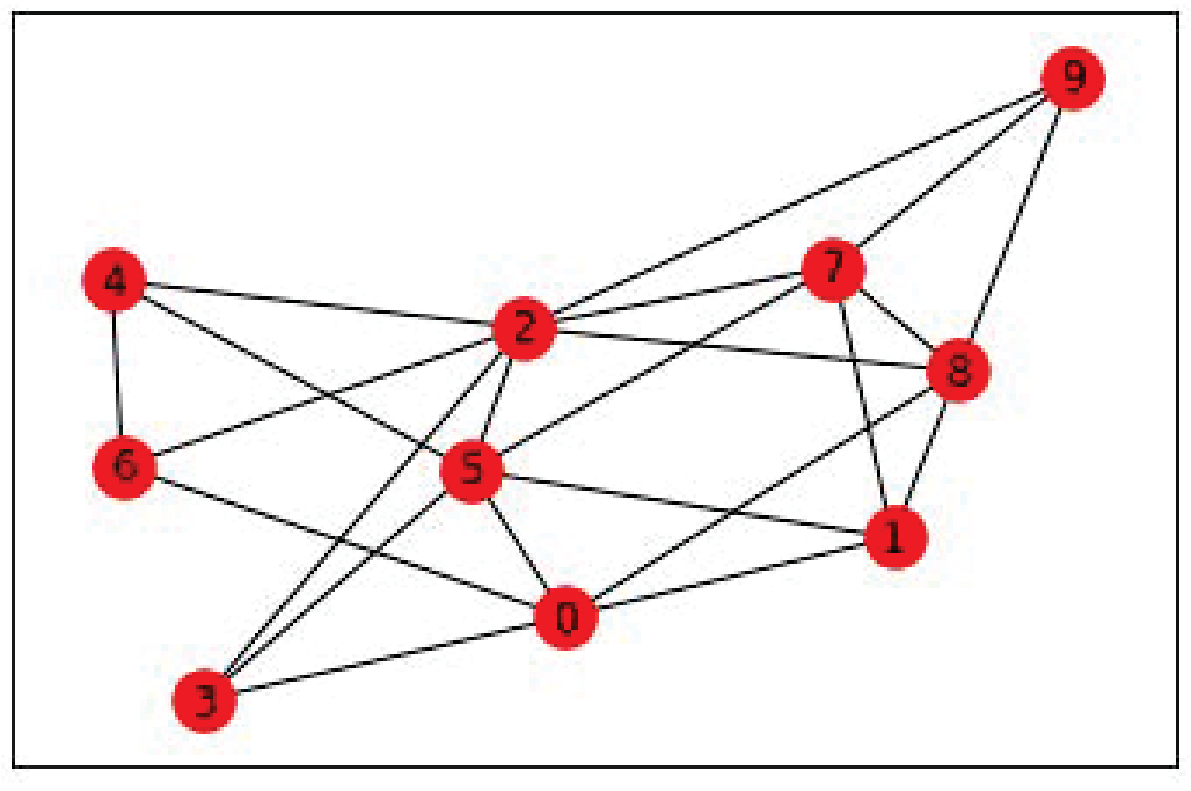}
	\caption{Decentralized network topology.}
\end{figure}

\begin{lemma}\label{lemma1}
	The update of the the algorithm can be written as
	\begin{align}
	\x^{k+1}=-\frac{1}{2c}\W^{-1} \nabla f(\x^{k+1})+\frac{\W^{-1}\Lb_+}{2}(\x^k+\e^k)-\frac{\W^{-1}\Lb_-}{2}\left(\sum\limits^k_{s=0} \x^s+\e^s \right).
	\end{align}
\end{lemma}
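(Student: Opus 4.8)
The plan is to derive the claimed closed form by eliminating the dual variable $\alpha^k$ from the error-contaminated ADMM recursion~\eqref{errorADMM}. First I would take the $\alpha$-update, $\alpha^{k+1}-\alpha^k = c\Lb_-\z^{k+1}$, and telescope it from $s=0$ up to $s=k$. Using the standard initialization $\alpha^0 = \zero$ (consistent with the paper's convention and with $\rb^k = \sum_{s=0}^k \Q\z^s$ playing the role of the accumulated dual), this gives $\alpha^k = c\Lb_-\sum_{s=1}^{k}\z^{s}$, or after adjusting the index range, $\alpha^k = c\Lb_-\sum_{s=0}^{k}\z^{s}$ modulo the $s=0$ term which vanishes under the initialization. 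I should be careful about exactly which indices enter the sum; the cleanest route is to write $\alpha^k$ in terms of the partial sums of $\z^s = \x^s + \e^s$ so that the coefficient $\tfrac{1}{2}$ in the statement matches $\Lb_-$ versus $\Lb_-/2$ bookkeeping.

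Next I would substitute this expression for $\alpha^k$ into the $\x$-update $\nabla f(\x^{k+1}) + \alpha^k + 2c\W\x^{k+1} = c\Lb_+\z^k$. Solving for $\x^{k+1}$ requires left-multiplying by $(2c\W)^{-1}$, which is legitimate because $\W$ is block diagonal with the (positive) degrees on the diagonal, hence invertible. This yields
\begin{align*}
\x^{k+1} = -\frac{1}{2c}\W^{-1}\nabla f(\x^{k+1}) + \frac{1}{2}\W^{-1}\Lb_+\z^k - \frac{1}{2}\W^{-1}\Lb_-\sum_{s=0}^{k}\z^s,
\end{align*}
which is exactly the claimed identity once $\z^s$ is written as $\x^s + \e^s$. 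The factor $\tfrac{1}{2c}$ in front of the gradient term comes from $(2c\W)^{-1} = \tfrac{1}{2c}\W^{-1}$, and the factors $\tfrac{1}{2}$ on the $\Lb_\pm$ terms come from pulling $c$ out of $c\Lb_+$ and $c\Lb_-$ against the $\tfrac{1}{2c}$.

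The only genuinely delicate point — and the step I would treat most carefully — is the telescoping of the dual variable and the precise summation range, since an off-by-one there changes whether the last term reads $\sum_{s=0}^{k}$ or $\sum_{s=0}^{k-1}$; I would pin this down by checking the $k=0$ base case against~\eqref{errorADMM} directly with $\alpha^0 = \zero$. Everything else is routine linear algebra: invertibility of $\W$, and matching the scalar constants. I do not anticipate needing convexity, smoothness, or any of the Assumptions here — this lemma is a purely algebraic restatement of the iteration, which is presumably why it is placed first in the supplement and used as the workhorse for the subsequent convergence theorems.
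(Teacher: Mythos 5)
Your proposal matches the paper's own proof: telescope the dual update with $\alpha^0=\zero$ to get $\alpha^k=c\Lb_-\sum_{s=0}^{k}(\x^s+\e^s)$, substitute into the $\x$-update, and invert $2c\W$ (which the paper does exactly, modulo the same $s=0$ indexing convention you flag). The argument and constants are correct, so no further changes are needed.
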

\begin{proof}
	Using the second step of the algorithm, we can write
	\begin{align}
	\alpha^{k+1}=\alpha^k+c\Lb_-(\x^{k+1}+\e^{k+1})
	\end{align}
	and
	\begin{align}\label{alpha_update}
	\alpha^k=\alpha^{k-1}+c\Lb_-(\x^{k}+\e^k).
	\end{align}

	Sum and telescope from iteration 0 to $k$ using \eqref{alpha_update}, and we can get the following by assuming $\alpha^0=0$
	\begin{align}
	\alpha^k=c\Lb_-\sum \limits_{s=0}^k\left(\x^s+\e^s \right).
	\end{align}
	
	Substitute the above result to the first step in the algorithm and it yields
	\begin{align}
	2c\W\x^{k+1}=-\nabla f(\x^{k+1})+c\Lb_+(\x^k+\e^k)-c\Lb_-\sum \limits_{s=0}^k\left(\x^s+\e^s \right),
	\end{align}
	which completes the proof.
\end{proof}

\begin{lemma}\label{lemma2}
	The sequences satisfy
	\begin{align}
	\frac{\Lb_+}{2}(\z^{k+1}-\z^k)-\W\e^{k+1}=-\Q\mathbf r^{k+1}-\frac{1}{2c}\nabla f(\x^{k+1})
	\end{align}
\end{lemma}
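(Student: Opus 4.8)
The goal is to massage the identity from Lemma~\ref{lemma1} into the stated form, using the definitions $\z^k=\x^k+\e^k$, $\mathbf r^k=\sum_{s=0}^k \Q\z^s$, $\Q=\mathbf V\mathbf\Sigma^{1/2}\mathbf V^T$ with $\frac{\Lb_-}{2}=\mathbf V\mathbf\Sigma\mathbf V^T$ (so $\Q^2=\frac{\Lb_-}{2}$ and $\Q$ is symmetric), and $\W=\frac{1}{2}(\Lb_++\Lb_-)$. The plan is to start from Lemma~\ref{lemma1} written at index $k$ and at index $k+1$, and subtract, so that the running sum $\sum_{s=0}^k$ collapses to a single new term.

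First I would rewrite Lemma~\ref{lemma1} as
\begin{align*}
2c\W\x^{k+1}=-\nabla f(\x^{k+1})+c\Lb_+\z^k-2c\,\Q^2\!\sum_{s=0}^k \z^s,
\end{align*}
using $\Lb_-=2\Q^2$, and note that $\Q\sum_{s=0}^k\z^s=\mathbf r^k$, so the last term is $-2c\,\Q\mathbf r^k$. Hence $2c\W\x^{k+1}=-\nabla f(\x^{k+1})+c\Lb_+\z^k-2c\,\Q\mathbf r^k$, i.e.
\begin{align*}
2c\W\x^{k+1}+2c\,\Q\mathbf r^k = -\nabla f(\x^{k+1})+c\Lb_+\z^k .
\end{align*}
Now I would replace $\x^{k+1}$ on the left by $\z^{k+1}-\e^{k+1}$, giving
\begin{align*}
2c\W\z^{k+1}-2c\W\e^{k+1}+2c\,\Q\mathbf r^k = -\nabla f(\x^{k+1})+c\Lb_+\z^k .
\end{align*}
The remaining task is to recognize $2c\,\Q\mathbf r^k$ in terms of $2c\,\Q\mathbf r^{k+1}$: since $\mathbf r^{k+1}-\mathbf r^k=\Q\z^{k+1}$, we have $\Q\mathbf r^k=\Q\mathbf r^{k+1}-\Q^2\z^{k+1}=\Q\mathbf r^{k+1}-\frac{\Lb_-}{2}\z^{k+1}$. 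Substituting and using $\W=\frac12(\Lb_++\Lb_-)$ so that $2c\W\z^{k+1}-c\Lb_-\z^{k+1}=c\Lb_+\z^{k+1}$, the left side becomes $c\Lb_+\z^{k+1}-2c\W\e^{k+1}+2c\,\Q\mathbf r^{k+1}$. Moving $c\Lb_+\z^k$ over and dividing by $2c$ then yields exactly $\frac{\Lb_+}{2}(\z^{k+1}-\z^k)-\W\e^{k+1}=-\Q\mathbf r^{k+1}-\frac{1}{2c}\nabla f(\x^{k+1})$.

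There is no real obstacle here; the only point requiring care is bookkeeping the index shift on the running sum and correctly folding the $\frac{\Lb_-}{2}\z^{k+1}$ term that pops out of $\Q\mathbf r^k\to\Q\mathbf r^{k+1}$ into the $\W$-term via $\W=\frac12(\Lb_++\Lb_-)$ — this is the step that makes the $\Lb_+$ coefficient come out right. One should also confirm that $\Q$ commuting with $\frac{\Lb_-}{2}$ (both being functions of the same symmetric matrix) is used legitimately when writing $\Q\sum_s\z^s=\mathbf r^k$ and $\Q^2=\frac{\Lb_-}{2}$; this is immediate from the spectral definition of $\Q$.
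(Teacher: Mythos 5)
Your proposal is correct and follows essentially the same route as the paper's own proof: start from Lemma~\ref{lemma1}, identify $\frac{\Lb_-}{2}\sum_{s\le k}\z^s=\Q\rb^k$ via $\Q^2=\frac{\Lb_-}{2}$, shift $\rb^k$ to $\rb^{k+1}$ by absorbing a $\frac{\Lb_-}{2}\z^{k+1}$ term, and fold it into $\Lb_+$ using $\W=\frac{1}{2}(\Lb_++\Lb_-)$ together with $\x^{k+1}=\z^{k+1}-\e^{k+1}$. Only your opening sentence (subtracting Lemma~\ref{lemma1} at indices $k$ and $k+1$) misdescribes what you actually do, but the executed steps are self-contained and match the paper's argument.
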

\begin{proof}
	Based on Lemma \ref{lemma1} and the fact $\W=\frac{1}{2}(\Lb_-+\Lb_+)$, we can write
	\begin{align}
	\W(\x^{k+1}-\x^k-\e^k)+\W(\x^k+\e^k)-\frac{\Lb_+}{2}(\x^k+\e^k)=-\Q\mathbf r^k-\frac{1}{2c}\nabla f(\x^{k+1}).
	\end{align}
	
	Subtracting $\frac{\Lb_-}{2}(\x^{k+1}+\e^{k+1})$ from both sides of the above equation provides
	\begin{align}
	\W(\x^{k+1}-\x^k-\e^k)+\frac{\Lb_-}{2}(\x^k+\e^k)-\frac{\Lb_-}{2}(\x^{k+1}+\e^{k+1})=-\Q\mathbf r^{k+1}-\frac{1}{2c}\nabla f(\x^{k+1}).
	\end{align}
	
	Rearrange and we have the desired result.
\end{proof}

\begin{lemma}\label{lemma3}
	The null space of $\Q$ null$(\Q)$ is span$\{\mathbf{1}\}$.
\end{lemma}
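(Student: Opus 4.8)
The plan is to strip away the layers of the definition of $\Q$ and reduce the statement to the standard fact that the Laplacian (equivalently, the incidence operator) of a connected graph has the consensus subspace as its kernel. The first step is to note that $\Q$ is a symmetric square root of $\Lb_-/2$: from $\Q=\mathbf V\mathbf\Sigma^{1/2}\mathbf V^T$ and $\Lb_-/2=\mathbf V\mathbf\Sigma\mathbf V^T$ we get $\Q^2=\mathbf V\mathbf\Sigma\mathbf V^T=\Lb_-/2$. Hence $\Q\x=0$ implies $(\Lb_-/2)\x=\Q^2\x=0$, and conversely $(\Lb_-/2)\x=0$ gives $\|\Q\x\|^2=\x^T\Q^2\x=\x^T(\Lb_-/2)\x=0$, so $\Q\x=0$. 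Therefore $\mathrm{null}(\Q)=\mathrm{null}(\Lb_-)$.

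Next I would use the factorization $\Lb_-=\tfrac12\M_-\M_-^T$ together with the elementary identity $\mathrm{null}(\M_-\M_-^T)=\mathrm{null}(\M_-^T)$ (same argument: $\M_-\M_-^T\x=0\Rightarrow\|\M_-^T\x\|^2=0$), which yields $\mathrm{null}(\Q)=\mathrm{null}(\M_-^T)$. Recalling $\M_-=\A_1^T-\A_2^T$, i.e.\ $\M_-^T=\A_1-\A_2$, and the block structure of $\A_1,\A_2$: if the arc $(i,j)\in\mathcal A$ indexes the $q$-th block of $\mathbf y$, then the $q$-th block row of $\A_1-\A_2$ sends $\x=[\x_1;\dots;\x_D]$ to $\x_i-\x_j$. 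Consequently $\M_-^T\x=0$ if and only if $\x_i=\x_j$ for every arc $(i,j)\in\mathcal A$.

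Finally, since the network $\mathcal G_d$ is connected, $\x_i=\x_j$ along every arc forces $\x_1=\dots=\x_D$, i.e.\ $\x$ lies in the consensus subspace $\{\mathbf 1\otimes\vb:\vb\in\RR^N\}$, which is what is denoted $\mathrm{span}\{\mathbf 1\}$ here (one-dimensional when $N=1$). This gives $\mathrm{null}(\Q)=\mathrm{span}\{\mathbf 1\}$. The argument is routine linear algebra; the only points needing care are the passage from $\Q$ to $\Lb_-$ via the square-root relation (checking that $\mathbf\Sigma^{1/2}$ and $\mathbf\Sigma$ share the same zero pattern, so no rank change is hidden by the ``nonzero smallest eigenvalue'' convention) and reading off the block structure of $\M_-^T$ to recognize it as the arc–incidence operator whose kernel, for a connected graph, is exactly the consensus subspace. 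I do not expect any genuine obstacle beyond bookkeeping.
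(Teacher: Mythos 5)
Your proposal is correct and follows essentially the same route as the paper's proof: identify $\mathrm{null}(\Q)$ with $\mathrm{null}(\Lb_-)=\mathrm{null}(\M_-^T)$ via the square-root and Gram-matrix relations, then read off the arc-incidence structure of $\M_-^T=\A_1-\A_2$. You are in fact slightly more careful than the paper, which only exhibits $\M_-^T\mathbf 1=\mathbf 0$ (giving $\mathrm{span}\{\mathbf 1\}\subseteq\mathrm{null}(\M_-^T)$) and implicitly relies on connectivity for the reverse inclusion, a step you make explicit.
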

\begin{proof}
	Note that the null space of $\Q$ and $\Lb_-$ are the same. By definition, $\Lb_-=\frac{1}{2}\M_-\M_-^T$ and $\M_-=\A_1^T-\A_2^T$. Recall that if $(i,j)\in \mathcal A$ and $\y_{ij}$ is the $q$th
	block of $\y$, then the $(q,i)$th block of $\A_1$ and the $(q,j)$th block of $\A_2$ are $N\times N$ identity matrices $\I_N$; otherwise the corresponding blocks are $N\times N$ zero matrices $0_N$. Therefore, $\M_-^T=\A_1-\A_2$ is a matrix that each row has one ``1'', one ``-1'', and all zeros otherwise, which means $\M_-^T\mathbf{1}=\mathbf{0}$, i.e., null$(\M_-^T)$=span$\{\mathbf{1}\}$.
	
	Note that $\Lb_-=\frac{1}{2}\M_-\M_-^T$ and $\Q=\left(\frac{\Lb_-}{2}\right)^{\frac{1}{2}}$, thus null$(\Q)$=null$(\M_-^T)$, completing the proof.
\end{proof}

\begin{lemma}\label{lemma4}
	For some $\mathbf r^\ast$ that satisfies $\Q\mathbf r^\ast+\frac{1}{2c}\nabla f(\x^\ast)=0$ and $\mathbf r^\ast$ belongs to the column space of $\Q$, the sequences satisfy
	\begin{align}
	\frac{\Lb_+}{2}(\z^{k+1}-z^k)-\W\e^{k+1}=-\Q(\mathbf r^{k+1}-\mathbf r^\ast)-\frac{1}{2c}(\nabla f(\x^{k+1})-\nabla f(\x^\ast))
	\end{align}
\end{lemma}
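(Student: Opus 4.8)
The plan is to derive the claimed identity directly from Lemma~\ref{lemma2} by subtracting a single fixed optimality relation, so the only real work is to produce the vector $\mathbf r^\ast$. Lemma~\ref{lemma2} already gives
\[
\frac{\Lb_+}{2}(\z^{k+1}-\z^k)-\W\e^{k+1}=-\Q\mathbf r^{k+1}-\frac{1}{2c}\nabla f(\x^{k+1}).
\]
If $\mathbf r^\ast$ lies in the column space of $\Q$ and satisfies $\Q\mathbf r^\ast+\frac{1}{2c}\nabla f(\x^\ast)=0$, then adding the zero quantity $\Q\mathbf r^\ast+\frac{1}{2c}\nabla f(\x^\ast)$ to the right-hand side and regrouping yields exactly $-\Q(\mathbf r^{k+1}-\mathbf r^\ast)-\frac{1}{2c}(\nabla f(\x^{k+1})-\nabla f(\x^\ast))$, which is the assertion.

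The substantive step is therefore the existence of such an $\mathbf r^\ast$. First I would invoke the first-order optimality (KKT) conditions of the consensus problem~\eqref{consensus_form}: primal feasibility together with connectivity of $\mathcal G_d$ forces the optimizer $\x^\ast$ to be consensual (all blocks equal to a common $\tilde\x^\ast$), and stationarity gives $\sum_{i=1}^D\nabla f_i(\tilde\x^\ast)=0$, i.e.\ in block form $\nabla f(\x^\ast)$ is orthogonal to $\mathrm{span}\{\mathbf 1\}$. By Lemma~\ref{lemma3}, $\mathrm{null}(\Q)=\mathrm{span}\{\mathbf 1\}$, and since $\Q$ is symmetric positive semidefinite we have $\mathrm{range}(\Q)=\mathrm{null}(\Q)^\perp$. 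Hence $\nabla f(\x^\ast)\in\mathrm{range}(\Q)$, so there exists $\mathbf r^\ast$ in the column space of $\Q$ solving $\Q\mathbf r^\ast=-\frac{1}{2c}\nabla f(\x^\ast)$; moreover $\Q$ restricted to its column space is invertible, so this $\mathbf r^\ast$ is the unique such vector.

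It is also worth noting for consistency with later arguments that each $\mathbf r^k=\sum_{s=0}^k\Q(\x^s+\e^s)$ lies in the column space of $\Q$ by construction, so $\mathbf r^{k+1}-\mathbf r^\ast$ remains in that subspace, which subsequent estimates will use. The main obstacle I anticipate is not the algebra, which is a one-line subtraction, but cleanly justifying the KKT step: arguing that feasibility on a connected network implies global consensus of $\x^\ast$, and translating the scalar stationarity $\sum_i\nabla f_i(\tilde\x^\ast)=0$ into the orthogonality $\mathbf 1^\top\nabla f(\x^\ast)=0$ that places $\nabla f(\x^\ast)$ in $\mathrm{range}(\Q)$ and thereby guarantees $\mathbf r^\ast$ exists.
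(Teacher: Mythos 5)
Your proposal is correct and follows essentially the same route as the paper's proof: start from Lemma~\ref{lemma2}, use Lemma~\ref{lemma3} (null$(\Q)=$ span$\{\mathbf 1\}$) together with $\mathbf 1^T\nabla f(\x^\ast)=0$ to place $\nabla f(\x^\ast)$ in the range of $\Q$ and obtain $\mathbf r^\ast$ in the column space of $\Q$, then subtract the stationarity relation $\Q\mathbf r^\ast+\frac{1}{2c}\nabla f(\x^\ast)=0$ from the Lemma~\ref{lemma2} identity. Your additional KKT argument simply fills in the step the paper asserts without proof, so there is no substantive difference.
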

\begin{proof}
	Using Lemma \ref{lemma2}, we have 
	\begin{align}
	\frac{\Lb_+}{2}(\z^{k+1}-\z^k)-\W\e^{k+1}=-\Q\mathbf r^{k+1}-\frac{1}{2c}\nabla f(\x^{k+1}).
	\end{align}
	
	According to Lemma \ref{lemma3}, null$(\Q)$ is span$\{\mathbf{1}\}$. Since $\mathbf{1}^T\nabla f(\x^\ast)=0$, $\nabla f(\x^\ast)$ can be written as a linear combination of column vectors of $\Q$. Therefore, there exists $\mathbf r$ such that $\frac{1}{2c}\nabla f(x^\ast)=-\Q\mathbf r$. Let $\mathbf r^\ast$ be the projection of $\mathbf r$ onto $\Q$ to obtain $\Q\mathbf r=\Q\mathbf r^\ast$ where $\mathbf r^\ast$ lies in the column space of $\Q$.
	
	Hence, we can write
	\begin{align}
	\frac{\mathbf L_+}{2}(\z^{k+1}-\z^k)-\W\e^{k+1}=-\Q(\mathbf r^{k+1}-\mathbf r^\ast)-\frac{1}{2c}(\nabla f(\x^{k+1})-\nabla f(\x^\ast))
	\end{align}
\end{proof}

\begin{lemma}\label{lemma5}
	$\langle \x^\ast, \Q \rangle =0.$
\end{lemma}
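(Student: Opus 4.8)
The plan is to read the claim $\inner{\x^\ast}{\Q}=0$ as the statement that $\x^\ast$ lies in the null space of $\Q$, so that $\Q\x^\ast=\zero$ and, by symmetry of $\Q$, $\inner{\x^\ast}{\Q\vb}=0$ for every $\vb$ (in particular for $\vb=\mathbf r^{k}$ and $\vb=\mathbf r^\ast$, which is how the identity gets used in the later estimates). Since Lemma~\ref{lemma3} already gives $\mathrm{null}(\Q)=\mathrm{span}\{\mathbf 1\}$, it suffices to show that the optimal primal point $\x^\ast$ is a consensus vector, i.e. has all of its $N\times N$ blocks equal.

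First I would invoke the problem formulation \eqref{consensus_form}: the constraints $\x_i=\mathbf y_{ij}$ and $\x_j=\mathbf y_{ij}$ over all arcs $(i,j)\in\mathcal A$ force $\x_i^\ast=\x_j^\ast$ for every edge, and since the network $\mathcal G_d$ is connected this propagates to $\x_1^\ast=\cdots=\x_D^\ast$. Equivalently, recalling from the proof of Lemma~\ref{lemma3} that $\M_-^T=\A_1-\A_2$ has, in each block row indexed by an arc $(i,j)$, the blocks $\I_N$ and $-\I_N$ in block columns $i$ and $j$ and zeros elsewhere, consensus of $\x^\ast$ is exactly the statement $\M_-^T\x^\ast=\zero$.

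Then I would chain this through the definitions: $\Lb_-=\tfrac12\M_-\M_-^T$ gives $\Lb_-\x^\ast=\zero$, and since $\Q=(\Lb_-/2)^{1/2}$ shares the null space of $\Lb_-$ (a fact already used in Lemma~\ref{lemma3}), we obtain $\Q\x^\ast=\zero$. Finally, because $\Q$ is symmetric, $\inner{\x^\ast}{\Q\vb}=\inner{\Q\x^\ast}{\vb}=0$ for all $\vb$, which is the asserted orthogonality; taking $\vb$ to be any vector recovers $\inner{\x^\ast}{\Q}=0$ in the intended sense.

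The only step that needs any care is the first one — pinning down that optimality for \eqref{consensus_form} on a connected graph forces the consensus (block-constant) structure of $\x^\ast$; once that is in hand, everything else is a one-line consequence of the definitions of $\M_-,\Lb_-,\Q$ together with Lemma~\ref{lemma3}. There is also a minor notational point about what $\inner{\x^\ast}{\Q}=0$ abbreviates, which the argument above settles by establishing the stronger fact $\Q\x^\ast=\zero$.
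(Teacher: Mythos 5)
Your proposal is correct and follows essentially the same route as the paper: both arguments reduce the claim to the observation that the optimal consensus point $\x^\ast$ is block-constant, hence lies in $\mathrm{span}\{\mathbf 1\}=\mathrm{null}(\Q)$ by Lemma~\ref{lemma3}, so that $\Q\x^\ast=\zero$ and the inner-product statement follows. Your version merely spells out the intermediate steps (feasibility on a connected graph, $\M_-^T\x^\ast=\zero$, the shared null space of $\Lb_-$ and $\Q$, symmetry of $\Q$) that the paper leaves implicit.
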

\begin{proof}
	Since the optimal consensus solution $\x^\ast$ has an identical value for all its entries, $\x^\ast$ lies in the space spanned by $\mathbf{1}$. Thus, according to Lemma \ref{lemma3}, we have the desired result, and also $\langle \x^\ast, \Lb_- \rangle =0.$
\end{proof}
\section{Proof of Theorem \ref{convex}}
\begin{proof}
We prove the first part in Theorem \ref{convex}.
	Assuming $f(\x)$ is convex, we can have
	\begin{align}
		f(\x^{k+1})-f(\x^\ast) \le \langle \x^{k+1}-\x^\ast, \nabla f(x^{k+1}) \rangle.
	\end{align}
 By Lemma \ref{lemma2}, it yields
 \begin{align}
 	f(\x^{k+1})-f(\x^\ast) \le& \langle \x^{k+1}-\x^\ast, 2c\W \e^{k+1}-2c\Q\rb^{k+1}-c\Lb_+(\z^{k+1}-\z^k) \rangle \\
 	 =&\langle \x^{k+1}-\x^\ast, c\Lb_+(\z^{k}-\z^{k+1}) \rangle +\langle \x^{k+1}-\x^\ast, 2c\W \e^{k+1} \rangle \\
     &+\langle \x^{k+1}-\x^\ast, -2c\Q\rb^{k+1} \rangle \\
 	 =&\langle \z^{k+1}-\z^\ast, c\Lb_+(\z^{k}-\z^{k+1}) \rangle-\langle \e^{k+1},c\Lb_+(\z^{k}-\z^{k+1}) \rangle \\&+\langle \z^{k+1}-\z^\ast, -2c\Q\rb^{k+1} \rangle-\langle \e^{k+1},-2c\Q\rb^{k+1} \rangle +\langle \e^{k+1}, 2c\W (\x^{k+1}-\x^\ast) \rangle\\
 	 =& 2\langle \z^{k+1}-\z^\ast, \frac{c\Lb_+}{2}(\z^k-\z^{k+1})\rangle +2\langle \rb^k-\rb^{k+1},c(\rb^{k+1}-\rb^\prime)\rangle\\
 	 &+\langle  \e^{k+1}, c\Lb_+(\z^{k+1}-\z^k)+2c\Q\rb^{k+1}+2c\W(\x^{k+1}-\x^\ast)\rangle .\label{inter_conv_pre}
 \end{align}
 
 If the algorithm stops at $T$-th iteration, then the function value $f(\x^T)$ is affected by the error $\e^k$ with $k=0,1,\ldots,T-1$. Thus, we can set $k=T-1$ and $\e^T=\mathbf 0$ in the above bound, and obtain
 \begin{align}
 		f(\x^{T})-f(\x^\ast) \le& \|\z^{T-1}-\z^\ast\|^2_{\frac{c\Lb_+}{2}}-\|\z^{T}-\z^\ast\|^2_{\frac{c\Lb_+}{2}}-\|\z^{T-1}-\z^{T}\|^2_{\frac{c\Lb_+}{2}}\\
 		&+c\|\rb^{T-1}-\rb^\prime\|^2_2-c\|\rb^{T}-\rb^\prime\|^2_2-c\|\rb^{T-1}-\rb^{T}\|^2_2\\
 		\le &\|\q^{T-1}-\p\|^2_{\G}.
 \end{align}

Now we prove the second part in Theorem \ref{convex}.	By convexity, for any $\rb \in \RR^{DN}$, we can have
	\begin{align}
		&\frac{f(\x^{k+1})-f(\x^\ast)}{c}+2\rb^\prime\Q\x^{k+1}\\
		\le& \langle \x^{k+1}-\x^\ast, -\Lb_+(\x^{k+1}-\x^k)-\Lb_+(\x^k-\z^k)-2\Q(\rb^{k+1}-\rb)+\Lb_-(\z^{k+1}-\x^{k+1})\rangle\\
		=&\langle \x^{k+1}-\x^\ast, \Lb_+(\x^{k}-\x^{k+1})\rangle+\langle \x^{k+1}-\x^\ast,\Lb_+(\z^k-\x^k)\rangle+\langle\x^{k+1}-\x^\ast,2\Q(\rb-\rb^{k+1}) \rangle\\
		&+\langle\x^{k+1}-\x^\ast,\Lb_-(\z^{k+1}-\x^{k+1}) \rangle\\
		=&\langle \x^{k+1}-\x^\ast, \Lb_+(\x^{k}-\x^{k+1})\rangle+\langle \x^{k+1}-\x^\ast,\Lb_+(\z^k-\x^k)\rangle+\langle\z^{k+1}-\x^\ast,2\Q(\rb-\rb^{k+1}) \rangle\\
		&+\langle\x^{k+1}-\x^\ast,\Lb_-(\z^{k+1}-\x^{k+1}) \rangle+\langle \e^{k+1},2\Q(\rb^{k+1}-\rb)\rangle\\
		=&\langle \x^{k+1}-\x^\ast, \Lb_+(\x^{k}-\x^{k+1})\rangle+\langle \x^{k+1}-\x^\ast,\Lb_+(\z^k-\x^k)\rangle+\langle\rb^{k+1}-\rb^k,2(\rb-\rb^{k+1}) \rangle\\
		&+\langle\x^{k+1}-\x^\ast,\Lb_-(\z^{k+1}-\x^{k+1}) \rangle+\langle \e^{k+1},2\Q(\rb^{k+1}-\rb)\rangle\\
		=&\frac{1}{c}(\|\p^k-\p\|^2_\G-\|\p^{k+1}-\p\|^2_\G-\|\p^{k+1}-\p^k\|^2_\G)+\langle \x^{k+1}-\x^\ast,\Lb_+(\z^k-\x^k)\rangle\\
		&+\langle\x^{k+1}-\x^\ast,\Lb_-(\z^{k+1}-\x^{k+1}) \rangle+\langle \e^{k+1},2\Q(\rb^{k+1}-\rb)\rangle\\
		=&\frac{1}{c}(\|\p^k-\p\|^2_\G-\|\p^{k+1}-\p\|^2_\G)-\|\Q\x^{k+1}\|_2^2-\|\Q\e^{k+1}\|_2^2+2\langle \frac{\Lb_+}{2}(\x^{k+1}-\x^\ast),\z^k-\x^k\rangle\\
		&+\langle \e^{k+1},2\Q(\rb^{k+1}-\rb)\rangle\\
		=&\frac{1}{c}(\|\p^k-\p\|^2_\G-\|\p^{k+1}-\p\|^2_\G)-\frac{\sigma_{\min}(\Lb_-)}{2}\|\x^{k+1}-\x^\ast\|_2^2-\|\Q\e^{k+1}\|_2^2\\
		&+\frac{1}{\alpha}\|\frac{\Lb_+}{2}(\x^{k+1}-\x^\ast)\|^2_2+\alpha\|\z^k-\x^k\|^2_2+\langle \e^{k+1},2\Q(\rb^{k+1}-\rb)\rangle\\
		\myeq&\quad\quad\quad \frac{1}{c}(\|\p^k-\p\|^2_\G-\|\p^{k+1}-\p\|^2_\G)-\|\Q\e^{k+1}\|_2^2+\frac{\sigma^2_{\max}(\Lb_+)}{2\sigma_{\min}(\Lb_-)} \|\z^k-\x^k\|^2_2\\
		&+\langle \e^{k+1},2\Q(\rb^{k+1}-\rb)\rangle\\
\le& \frac{1}{c}(\|\p^k-\p\|^2_\G-\|\p^{k+1}-\p\|^2_\G) +\frac{\sigma^2_{\max}(\Lb_+)}{2\sigma_{\min}(\Lb_-)} \|\e^k\|^2_2+\langle \e^{k+1},2\Q(\rb^{k+1}-\rb)\rangle.
	\end{align}
By letting $\rb=0$, telescope and sum from $k=0$ to $T-1$ (the error for the last iteration $\e^T=0$), and we obtain
\begin{align}
\frac{1}{c}\sum_{k=1}^T\left( f(\x^k)-f(\x^\ast)\right) \le \frac{1}{c}\|\p^0-\p\|^2_\G+\sum_{k=0}^{T-1} \left(\frac{\sigma^2_{\max}(\Lb_+)}{2\sigma_{\min}(\Lb_-)} \|\e^{k}\|^2_2+\langle \e^{k},2\Q\rb^{k})\rangle \right).
\end{align}
		Rearrange and we have the desired result.	
\end{proof}

\section{Proof of Theorem \ref{lemma6}}
\begin{proof}
By $v$-strong convexity, we obtain
	\begin{align}\label{str_conv}
	{v}\|\x^{k+1}-\x^\ast\|_2^2 \le&  \langle \x^{k+1}-\x^\ast,\nabla f(\x^{k+1})-\nabla f(\x^\ast) \rangle \\= & \langle \x^{k+1}-\x^\ast, c\mathbf L_+(\z^k-\z^{k+1})+2c\W\e^{k+1}-2c\Q(\mathbf r^{k+1}-\mathbf r^\ast)\rangle \\
	=& \langle \x^{k+1}-\x^\ast, c\mathbf L_+(\z^k-\z^{k+1})\rangle + \langle \x^{k+1}-\x^\ast, 2c\W\e^{k+1}\rangle\\
	&+
	\langle \x^{k+1}-\x^\ast, -2c\Q(\mathbf r^{k+1}-\mathbf r^\ast)\rangle \\
	=& \langle \z^{k+1}-\z^\ast, c\mathbf L_+(\z^k-\z^{k+1})\rangle- \langle \e^{k+1}, c\mathbf L_+(\z^k-\z^{k+1})\rangle\\
	&+ \langle \x^{k+1}+\e^{k+1}-\x^\ast, -2c\Q(\mathbf r^{k+1}-\mathbf r^\ast)\rangle \\ &-\langle \e^{k+1}, -2c\Q(\mathbf r^{k+1}-\mathbf r^\ast)\rangle+ \langle \e^{k+1}, 2c\W(\x^{k+1}-\x^\ast )\rangle\\
	=& 2\langle \z^{k+1}-\z^\ast, \frac{c\mathbf L_+}{2}(\z^k-\z^{k+1}) \rangle +2\langle \mathbf r^k-\mathbf r^{k+1}, c (\mathbf r^{k+1}-\mathbf r^\ast)\rangle\\
	&+\langle \e^{k+1}, c\mathbf L_+(\z^{k+1}-\z^k)+2c\Q(\mathbf r^{k+1}-\mathbf r^\ast)+2c\W(\x^{k+1}-\x^\ast)\rangle \\
	=& \|\q^k-\q^\ast\|^2_\G-\|\q^{k+1}-\q^\ast\|^2_\G-\|\q^k-\q^{k+1}\|^2_\G\\
	&+\langle \e^{k+1}, c\mathbf L_+(\z^{k+1}-\z^k)+2c\Q(\mathbf r^{k+1}-\mathbf r^\ast)+2c\W(\x^{k+1}-\x^\ast)\rangle
	\end{align}
	
	For any $\lambda>0$, using the basic inequality
	\begin{align}
	\|\mathbf a+\mathbf b\|^2_2+(\lambda-1)\|\mathbf a\|_2^2 \ge (1-\frac{1}{\lambda})\|\mathbf b\|_2^2
	\end{align}
	we can write for $\lambda_1> 1$ and $\lambda_2> 1$
	\begin{align}
	&\frac{\sigma_{\max}^2(\mathbf L_+)}{4}\|\z^{k+1}-\z^k\|^2_2+\frac{(\lambda_1-1)L^2\|\x^{k+1}-\x^\ast\|^2_2}{4c^2}\\
	&\ge \|\frac{\mathbf L_+}{2}(\z^{k+1}-\z^k)\|^2_2+(\lambda_1-1)\|\frac{1}{2c}\left(\nabla f(\x^{k+1})-\nabla f(\x^\ast) \right)\|^2_2\\
	&\ge \left( 1-\frac{1}{\lambda_1}\right)\|\W\e^{k+1}-\Q(\mathbf r^{k+1}-\mathbf r^\ast)\|^2_2\\
	& \ge \left( 1-\frac{1}{\lambda_1}\right)(1-\frac{1}{\lambda_2})\|\Q(\mathbf r^{k+1}-\mathbf r^\ast)\|^2_2-\left( 1-\frac{1}{\lambda_1}\right)(\lambda_2-1)\|\W\e^{k+1}\|^2_2\\
	&\ge \left( 1-\frac{1}{\lambda_1}\right)(1-\frac{1}{\lambda_2})\sigma^2_{\min}(\Q)\|\mathbf r^{k+1}-\mathbf r^\ast\|^2_2-\left( 1-\frac{1}{\lambda_1}\right)(\lambda_2-1)\sigma^2_{\max}(\W)\|\e^{k+1}\|^2_2.
	\end{align}
	Thus, for a positive quantity $\delta$,
	\begin{align}\label{int_delta}
	&\frac{\delta\sigma_{\max}^2(\mathbf L_+)\lambda_1\lambda_2}{4\sigma^2_{\min}(\mathbf Q)(\lambda_1-1)(\lambda_2-1)}\|\z^{k+1}-\z^k\|^2_2+\frac{\delta\lambda_1\lambda_2 L^2\|\x^{k+1}-\x^\ast\|^2_2}{4c^2\sigma^2_{\min}(\Q)(\lambda_2-1)}\\
	& \ge \delta \|\mathbf r^{k+1}-\mathbf r^\ast\|^2_2-\frac{\delta\lambda_2\sigma^2_{\max}(\W)}{\sigma^2_{\min}(\Q)}\|\e^{k+1}\|^2_2.
	\end{align}
	
	Since $\x^{k+1}-\x^\ast=\z^{k+1}-\z^\ast-\e^{k+1}$, 
	for any $\lambda_3 >1$, we can get
	\begin{align}\label{xz_inq}
	\|\x^{k+1}-\x^\ast\|^2_2 \ge \left(1-\frac{1}{\lambda_3}\right)\|\z^{k+1}-\z^\ast\|^2_2-\left(\lambda_3-1\right)\|\e^{k+1}\|^2_2.
	\end{align}

	Therefore, the addition of \eqref{int_delta}$\times c^2$ and \eqref{xz_inq}$\times \frac{\delta c^2\sigma^2_{\max}(L_+)\lambda_3}{4(\lambda_3-1)}$ yields
	\begin{align}
	&\frac{c^2\delta\sigma_{\max}^2(\mathbf L_+)\lambda_1\lambda_2}{4\sigma^2_{\min}(\Q)(\lambda_1-1)(\lambda_2-1)}\|\z^{k+1}-\z^k\|^2_2+\left(\frac{\delta\lambda_1\lambda_2 L^2}{4\sigma^2_{\min}(\Q)(\lambda_2-1)}+\frac{\delta c^2\sigma^2_{\max}(\mathbf L_+)\lambda_3}{4(\lambda_3-1)}\right)\|\x^{k+1}-\x^\ast\|^2_2\\
	& \ge \delta c^2 \|\mathbf r^{k+1}-\mathbf r^\ast\|^2_2+\frac{\delta c^2\sigma^2_{\max}(\mathbf L_+)}{4}\|\z^{k+1}-\z^k\|^2_2-\left(\frac{c^2 \delta\lambda_2\sigma^2_{\max}(\W)}{\sigma^2_{\min}(\Q)}+\frac{\delta c^2\sigma^2_{\max}(\mathbf L_+)\lambda_3}{4}\right)\|\e^{k+1}\|^2_2\\
	& \ge \delta \|c(\mathbf r^{k+1}-\mathbf r^\ast)\|^2_2+\delta \|\frac{c\mathbf L_+}{2}(\z^{k+1}-\z^k)\|^2_2 -\left(\frac{c^2 \delta\lambda_2\sigma^2_{\max}(\W)}{\sigma^2_{\min}(\Q)}+\frac{\delta c^2\sigma^2_{\max}(\mathbf L_+)\lambda_3}{4}\right)\|\e^{k+1}\|^2_2\\
	& =\delta \|\q^{k+1}-\q^\ast\|^2_\G-\left(\frac{c^2 \delta\lambda_2\sigma^2_{\max}(\W)}{\sigma^2_{\min}(\Q)}+\frac{\delta c^2\sigma^2_{\max}(\mathbf L_+)\lambda_3}{4}\right)\|\e^{k+1}\|^2_2.
	\end{align}
	
	Choose $\delta$ to be such that 
	\begin{align}
	\frac{c^2\delta\sigma_{\max}^2(\mathbf L_+)\lambda_1\lambda_2}{4\sigma^2_{\min}(\Q)(\lambda_1-1)(\lambda_2-1)} &\le \frac{c^2\sigma^2_{\min}\mathbf (\mathbf L_+)}{4}\\
	\left(\frac{\delta\lambda_1\lambda_2 L^2}{4\sigma^2_{\min}(\mathbf Q)(\lambda_2-1)}+\frac{ \delta c^2\sigma^2_{\max}(\mathbf L_+)\lambda_3}{4(\lambda_3-1)}\right) &\le v,
	\end{align}
	and we can have
	\begin{align}
	&\frac{c^2\sigma^2_{\min}(\mathbf L_+)}{4}\|\z^{k+1}-\z^k\|^2_2+v \|\x^{k+1}-\x^k\|^2_2 \ge\\ 
	&\delta \|\q^{k+1}-\q^\ast\|^2_\G-\left(\frac{c^2 \delta\lambda_2\sigma^2_{\max}(\W)}{\sigma^2_{\min}(\Q)}+\frac{\delta c^2\sigma^2_{\max}(\mathbf L_+)\lambda_3}{4}\right)\|\e^{k+1}\|^2_2.
	\end{align}
	Thus, it is straightforward to write
	\begin{align}
	& \|\q^{k+1}-\q^k\|_\G^2+v \|\x^{k+1}-\x^k\|^2_2\\
	&\ge \|c(\mathbf r^{k+1}-\mathbf r^k)\|^2_2+\|\frac{c\mathbf L_+}{2}(\z^{k+1}-\z^k)\|^2_2+v \|\x^{k+1}-\x^k\|^2_2\\
	&\ge c^2\|\mathbf r^{k+1}-\mathbf r^k\|^2_2 +\frac{c^2\sigma^2_{\min}(\mathbf L_+)}{4}\|\z^{k+1}-\z^k\|^2_2+v \|\x^{k+1}-\x^k\|^2_2 \\
	&\ge c^2\|\mathbf r^{k+1}-\mathbf r^k\|^2_2+\delta \|\q^{k+1}-\q^\ast\|^2_\G-\left(\frac{c^2 \delta\lambda_2\sigma^2_{\max}(\W)}{\sigma^2_{\min}(\Q)}+\frac{\delta c^2\sigma^2_{\max}(\mathbf L_+)\lambda_3}{4}\right)\|\e^{k+1}\|^2_2.
	\end{align}
	Recall the result in \eqref{str_conv} regarding the bound to $v \|\x^{k+1}-\x^k\|^2_2$, and we can further write
	\begin{align}
	&\|\q^k-\q^\ast\|^2_\G-\|\q^{k+1}-\q^\ast\|^2_\G\\
	&+\langle \e^{k+1}, c\mathbf L_+(\z^{k+1}-\z^k)+2c\Q(\mathbf r^{k+1}-\mathbf r^\ast)+2c\W(\x^{k+1}-\x^\ast)\rangle \\
	& \ge \delta \|\q^{k+1}-\q^\ast\|^2_\G-\left(\frac{c^2 \delta\lambda_2\sigma^2_{\max}(\W)}{\sigma^2_{\min}(\Q)}+\frac{\delta c^2\sigma^2_{\max}(\mathbf L_+)\lambda_3}{4}\right)\|\e^{k+1}\|^2_2
	\end{align}
	
	Let $P=\frac{c^2 \delta\lambda_2\sigma^2_{\max}(\W)}{\sigma^2_{\min}(\Q)}+\frac{\delta c^2\sigma^2_{\max}(\mathbf L_+)\lambda_3}{4}$. Rearrange the expression and we get
	\begin{align}
	\|\q^{k+1}-\q^\ast\|^2_\G \le & \frac{\|\q^{k}-\q^\ast\|^2_\G}{1+\delta}+\frac{P}{1+\delta}\|\e^{k+1}\|^2_2\\
	&+\frac{1}{1+\delta}\langle \e^{k+1}, c\mathbf L_+(\z^{k+1}-\z^k)+2c\Q(\mathbf r^{k+1}-\mathbf r^\ast)+2c\W(\x^{k+1}-\x^\ast)\rangle
	\end{align}
\end{proof}

\begin{lemma}\label{lemma8}
	Let $\beta \in (0, \frac{1+\delta}{4})$, $b \in (0,1)$, $\lambda_4 >1$, and then we have
	\begin{align}
	&(1-b)\left(\frac{1}{4}-\frac{\beta}{1+\delta}\right)\sigma_{\min}^2(\mathbf L_+)\|\z^{k+1}-\z^\ast\|^2_2+\left(1-\frac{4\beta}{1+\delta}\right)\|\mathbf r^{k+1}-\mathbf r^\ast\|^2_2\\
	&+b\left(\frac{1}{4}-\frac{\beta}{1+\delta}\right)\sigma_{\min}^2(\mathbf L_+)\left(1-\frac{1}{\lambda_4}\right)\|\x^{k+1}-\x^\ast\|^2_2\\
	&\le \frac{1/4+\beta}{1+\delta}\sigma^2_{\max}(\mathbf L_+)\|\z^{k}-\z^\ast\|^2_2+ \frac{1}{1+\delta}\|\mathbf r^{k}-\mathbf r^\ast\|^2_2\\
	&+\left[\frac{P+1/2\beta}{(1+\delta)c^2}+b\left(\frac{1}{4}-\frac{\beta}{1+\delta}\right)\sigma_{\min}^2(\mathbf L_+)(\lambda_4-1)\right] \|\e^{k+1}\|^2_2\\
	&+\frac{4\beta\sigma^2_{\max}(\W)}{1+\delta}\|\x^{k+1}-\x^\ast\|^2_2.
	\end{align}
\end{lemma}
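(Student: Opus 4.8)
The plan is to take the single-step inequality of Theorem~\ref{lemma6}, namely $\|\q^{k+1}-\q^\ast\|^2_\G \le \frac{1}{1+\delta}\|\q^k-\q^\ast\|^2_\G + \frac{P}{1+\delta}\|\e^{k+1}\|^2_2 + \frac{1}{1+\delta}\langle\e^{k+1},\mathbf s\rangle$ with $\mathbf s = c\Lb_+(\z^{k+1}-\z^k)+2c\Q(\mathbf r^{k+1}-\mathbf r^\ast)+2c\W(\x^{k+1}-\x^\ast)$, and turn it into a recursion in the plain Euclidean quantities $\|\z-\z^\ast\|^2_2$, $\|\mathbf r-\mathbf r^\ast\|^2_2$, $\|\x-\x^\ast\|^2_2$, $\|\e\|^2_2$, introducing the auxiliary parameters $\beta\in(0,\frac{1+\delta}{4})$, $b\in(0,1)$ and $\lambda_4>1$ along the way. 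First I would expand the two $\G$-weighted terms through the block structure of $\G$, so that each splits into a multiple of $\|\mathbf r-\mathbf r^\ast\|^2_2$ plus a $\Lb_+$-weighted quadratic form in $\z-\z^\ast$; on the left side I bound the $\Lb_+$ contribution from below using $\sigma^2_{\min}(\Lb_+)$ (this is the source of the $\tfrac14\sigma^2_{\min}(\Lb_+)$ base coefficient), and on the right side from above using $\sigma^2_{\max}(\Lb_+)$. Each such eigenvalue bound is legitimate because $\z-\z^\ast$ and $\mathbf r-\mathbf r^\ast$ lie in the relevant column spaces — here Lemma~\ref{lemma3}, Lemma~\ref{lemma5}, and the construction of $\mathbf r^\ast$ in Lemma~\ref{lemma4} are invoked, so that the ``nonzero smallest eigenvalue'' really is a valid lower bound.

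The heart of the argument is controlling the cross term $\langle\e^{k+1},\mathbf s\rangle$. I would split it into its three constituent inner products and apply Young's inequality to each with weights proportional to $\beta$: the copy of $\e^{k+1}$ pulled out of each term produces an $O(1/\beta)\|\e^{k+1}\|^2_2$ contribution (the origin of the $1/(2\beta)$ inside the $\|\e^{k+1}\|^2_2$ coefficient), while the complementary pieces are $\beta$-weighted copies of $\|\Lb_+(\z^{k+1}-\z^k)\|^2_2$, $\|\Q(\mathbf r^{k+1}-\mathbf r^\ast)\|^2_2$ and $\|\W(\x^{k+1}-\x^\ast)\|^2_2$. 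The $\W$-piece is bounded using $\sigma^2_{\max}(\W)$, which is exactly the $\frac{4\beta\sigma^2_{\max}(\W)}{1+\delta}\|\x^{k+1}-\x^\ast\|^2_2$ term left sitting on the right-hand side of the lemma; the $\Q$-piece is absorbed into the $\|\mathbf r^{k+1}-\mathbf r^\ast\|^2_2$ term, bringing its coefficient to $1-\frac{4\beta}{1+\delta}$. The $\z^{k+1}-\z^k$ difference must not survive into the statement, so I write it as $(\z^{k+1}-\z^\ast)-(\z^k-\z^\ast)$ and split the resulting $\Lb_+$-weighted contributions: the $\z^{k+1}-\z^\ast$ part is subtracted from the left coefficient (producing the $\tfrac14-\tfrac{\beta}{1+\delta}$ factor, which is positive precisely under the hypothesis $\beta<\tfrac{1+\delta}{4}$), while the $\z^k-\z^\ast$ part is added to the right coefficient (producing the $\tfrac14+\beta$ factor). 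Alternatively one can eliminate $c\Lb_+(\z^{k+1}-\z^k)$ via Lemma~\ref{lemma4}, rewriting $\mathbf s$ as $2c\W(\z^{k+1}-\z^\ast)-(\nabla f(\x^{k+1})-\nabla f(\x^\ast))$ and reabsorbing the gradient by strong convexity; I expect the triangle-inequality route to have the lighter bookkeeping.

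Finally I introduce $b$ by splitting the left-side $\z^{k+1}-\z^\ast$ coefficient into a $(1-b)$ fraction, which is kept as is, and a $b$ fraction, which is converted to an $\x^{k+1}-\x^\ast$ term via $\z^{k+1}-\z^\ast=(\x^{k+1}-\x^\ast)+\e^{k+1}$ together with the elementary inequality $\|\mathbf a+\mathbf b\|^2_2\ge(1-\tfrac{1}{\lambda_4})\|\mathbf a\|^2_2-(\lambda_4-1)\|\mathbf b\|^2_2$ already used in the proof of Theorem~\ref{lemma6}; this yields both the $b(\tfrac14-\tfrac{\beta}{1+\delta})\sigma^2_{\min}(\Lb_+)(1-\tfrac1{\lambda_4})\|\x^{k+1}-\x^\ast\|^2_2$ term on the left and the matching $b(\tfrac14-\tfrac{\beta}{1+\delta})\sigma^2_{\min}(\Lb_+)(\lambda_4-1)\|\e^{k+1}\|^2_2$ contribution on the right. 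Collecting all terms and dividing by the common $c^2$ factor gives the stated inequality. The main obstacle is purely the bookkeeping: carrying the $c$, $\tfrac{1}{1+\delta}$, $\beta$, $b$, $\lambda_4$ factors consistently through every Young step without a sign slip, and making sure each $\sigma_{\min}$/$\sigma_{\max}$ bound is applied to a vector that genuinely lies in the appropriate range space.
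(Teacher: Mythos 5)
Your proposal follows essentially the same route as the paper: start from the single-step bound of Theorem 2, absorb the cross term $\langle\e^{k+1},\mathbf{s}\rangle$ via a Young-type inequality with parameter $\beta$ (splitting $\z^{k+1}-\z^k$ through $\z^\ast$), apply the extreme eigenvalue bounds for $\Lb_+$, $\Q$, $\W$, and finally peel off a $b$-fraction of the $\z^{k+1}-\z^\ast$ term using $\z^{k+1}-\z^\ast=(\x^{k+1}-\x^\ast)+\e^{k+1}$ with the $\lambda_4$ inequality, then divide by $c^2$. The only deviation is that you apply Young term-by-term while the paper applies it once to the whole vector $\mathbf{s}$ and then uses $\|\sum_{i=1}^4 \mathbf a_i\|^2\le 4\sum_i\|\mathbf a_i\|^2$; this is a bookkeeping difference that at most perturbs the absolute constants (e.g., the $1/(2\beta)$ factor), not the structure or validity of the argument.
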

\begin{proof}
	First, we rewrite the result in Lemma \ref{lemma6} in the following form
	\begin{align}
	&\|\frac{c\mathbf L_+}{2}(\z^{k+1}-\z^\ast)\|^2_2+\|c(\mathbf r^{k+1}-\mathbf r^\ast)\|^2_2 \le \frac{1}{1+\delta}\left(\|\frac{c\mathbf L_+}{2}(\z^{k}-\z^\ast)\|^2_2+\|c(\mathbf r^{k}-\mathbf r^\ast)\|^2_2 \right) \\
	&+\frac{P}{1+\delta}\|\e^{k+1}\|^2_2+\frac{1}{1+\sigma} \langle \e^{k+1}, c\mathbf L_+(\z^{k+1}-\z^k)+2c\Q(\mathbf r^{k+1}-\mathbf r^\ast)+2c\W(\x^{k+1}-\x^\ast) \rangle\\
	& \le \frac{1}{1+\delta}\left(\|\frac{c\mathbf L_+}{2}(\z^{k}-\z^\ast)\|^2_2+\|c(\mathbf r^{k}-\mathbf r^\ast)\|^2_2 \right)+\left(\frac{P}{1+\delta}+\frac{1/2\beta}{1+\delta} \right) \|\e^{k+1}\|^2_2\\
	&+\frac{\beta}{1+\delta}\|\frac{c\mathbf L_+}{2}(\z^{k+1}-\z^k)+c\Q(\mathbf r^{k+1}-\mathbf r^\ast)+c\W(\x^{k+1}-\x^\ast) \|^2_2\\
	& \le \frac{1}{1+\delta}\left(\|\frac{c\mathbf L_+}{2}(\z^{k}-\z^\ast)\|^2_2+\|c(\mathbf r^{k}-\mathbf r^\ast)\|^2_2 \right)+\left(\frac{P}{1+\delta}+\frac{1/2\beta}{1+\delta} \right) \|\e^{k+1}\|^2_2\\
	&+\frac{\beta}{1+\delta}\|\frac{c\mathbf L_+}{2}(\z^{k+1}-\z^k)+c\Q(\mathbf r^{k+1}-\mathbf r^\ast)+c\W(\x^{k+1}-\x^\ast) \|^2_2\\
	& \le \frac{1}{1+\delta}\left(\|\frac{c\mathbf L_+}{2}(\z^{k}-\z^\ast)\|^2_2+\|c(\mathbf r^{k}-\mathbf r^\ast)\|^2_2 \right)+\left(\frac{P}{1+\delta}+\frac{1/2\beta}{1+\delta} \right) \|\e^{k+1}\|^2_2\\
	&+\frac{4\beta}{1+\delta}\|\frac{c\Lb_+}{2}(\z^{k+1}-\z^\ast)\|^2_2+\frac{4\alpha}{1+\delta}\|\frac{c\mathbf L_+}{2}(\z^{k}-\z^\ast)\|^2_2\\
	&+\frac{4\beta}{1+\delta}\|c\Q(\mathbf r^{k+1}-\mathbf r^\ast)\|^2_2+\frac{4\beta}{1+\delta}\|c\W(\x^{k+1}-\x^\ast)\|^2_2
	\end{align}
	where $\beta >0$.
	
	Rearranging the inequality provides
	\begin{align}
	&\left(1-\frac{4\beta}{1+\delta}\right)\|\frac{c\mathbf L_+}{2}(\z^{k+1}-\z^\ast)\|^2_2+\left(1-\frac{4\beta}{1+\delta}\right)\|c(\mathbf r^{k+1}-\mathbf r^\ast)\|^2_2\\
	&\le \left( \frac{1}{1+\delta}+\frac{4\beta}{1+\delta}\right)\|\frac{c\mathbf L_+}{2}(\z^{k}-\z^\ast)\|^2_2+ \frac{1}{1+\delta}\|c(\mathbf r^{k}-\mathbf r^\ast)\|^2_2\\
	&+\frac{P+1/2\beta}{1+\delta} \|\e^{k+1}\|^2_2+\frac{4\beta}{1+\delta}\|c\W(\x^{k+1}-\x^\ast)\|^2_2.
	\end{align}
	Note that the parameters should be chosen such that $\left(1-\frac{4\beta}{1+\delta}\right) >0$.
	
	Then we can write
	\begin{align}
	&\left(\frac{1}{4}-\frac{\beta}{1+\delta}\right)\sigma_{\min}^2(\mathbf L_+)\|\z^{k+1}-\z^\ast\|^2_2+\left(1-\frac{4\beta}{1+\delta}\right)\|\mathbf r^{k+1}-\mathbf r^\ast\|^2_2\\
	&\le \left( \frac{1}{4(1+\delta)}+\frac{\beta}{1+\delta}\right)\sigma^2_{\max}(\mathbf L_+)\|\z^{k}-\z^\ast\|^2_2+ \frac{1}{1+\delta}\|\mathbf r^{k}-\mathbf r^\ast\|^2_2\\
	&+\frac{P+1/2\beta}{(1+\delta)c^2} \|\e^{k+1}\|^2_2+\frac{4\beta\sigma^2_{\max}(\W)}{1+\delta}\|\x^{k+1}-\x^\ast\|^2_2.
	\end{align}
	
	Since we have the inequality $\|\z^{k+1}-\z^\ast\|^2_2 \ge \left(1-\frac{1}{\lambda_4}\right)\|\x^{k+1}-\x^\ast\|^2_2-(\lambda_4-1)\|\e^{k+1}\|^2_2$, for $b\in (0,1)$, we can get
	\begin{align}
	b\left(\frac{1}{4}-\frac{\beta}{1+\delta}\right)\sigma_{\min}^2(\mathbf L_+)\|\z^{k+1}-\z^\ast\|^2_2 \ge& b\left(\frac{1}{4}-\frac{\beta}{1+\delta}\right)\sigma_{\min}^2(\mathbf L_+)\left(1-\frac{1}{\lambda_4}\right)\|\x^{k+1}-\x^\ast\|^2_2\\
	&-b\left(\frac{1}{4}-\frac{\beta}{1+\delta}\right)\sigma_{\min}^2(\mathbf L_+)(\lambda_4-1)\|\e^{k+1}\|^2_2.
	\end{align}
	Thus,
	\begin{align}
	&(1-b)\left(\frac{1}{4}-\frac{\beta}{1+\delta}\right)\sigma_{\min}^2(\mathbf L_+)\|\z^{k+1}-\z^\ast\|^2_2+\left(1-\frac{4\beta}{1+\delta}\right)\|\mathbf r^{k+1}-\mathbf r^\ast\|^2_2\\
	&+b\left(\frac{1}{4}-\frac{\beta}{1+\delta}\right)\sigma_{\min}^2(\mathbf L_+)\left(1-\frac{1}{\lambda_4}\right)\|\x^{k+1}-\x^\ast\|^2_2\\
	&\le \frac{1/4+\beta}{1+\delta}\sigma^2_{\max}(\mathbf L_+)\|\z^{k}-\z^\ast\|^2_2+ \frac{1}{1+\delta}\|\mathbf r^{k}-\mathbf r^\ast\|^2_2\\
	&+\left[\frac{P+1/2\beta}{(1+\delta)c^2}+b\left(\frac{1}{4}-\frac{\beta}{1+\delta}\right)\sigma_{\min}^2(\mathbf L_+)(\lambda_4-1)\right] \|\e^{k+1}\|^2_2\\
	&+\frac{4\beta\sigma^2_{\max}(\W)}{1+\delta}\|\x^{k+1}-\x^\ast\|^2_2.
	\end{align}
\end{proof}

Defining
\begin{align}
A_1=\frac{4}{(1-b)\sigma^2_{\min}(\mathbf L_+)},
\end{align}
and
\begin{align}
A_2=\frac{4}{(1+4\beta)\sigma^2_{\max}(\mathbf L_+)},
\end{align}
we have the desired result.

\section{Proof of Theorem \ref{thm1}}

	\subsection{Eliminate $\|\x^{k+1}-\x^\ast\|^2_2$}
	First, we want to eliminate the term $\|\x^{k+1}-\x^\ast\|^2_2$ in Lemma \ref{lemma8}, which requires
	\begin{align}
	 b\left(\frac{1}{4}-\frac{\beta}{1+\delta}\right)\sigma_{\min}^2(\mathbf L_+)\left(1-\frac{1}{\lambda_4}\right) \ge \frac{4\beta\sigma^2_{\max}(\W)}{1+\delta}
	\end{align}
	and it is equivalent to that
	\begin{align}
	\beta \le \frac{b(1+\delta)\sigma^2_{\min}(\mathbf L_+)\left(1-\frac{1}{\lambda_4}\right)}{4b\sigma^2_{\min}(\mathbf L_+)\left(1-\frac{1}{\lambda_4}\right)+16\sigma^2_{\max}(\W)}
	\end{align}
	
	Then we can write
	\begin{align}
	&(1-b)\left(\frac{1}{4}-\frac{\beta}{1+\delta}\right)\sigma_{\min}^2(\mathbf L_+)\|\z^{k+1}-\z^\ast\|^2_2+\left(1-\frac{4\beta}{1+\delta}\right)\|\mathbf r^{k+1}-\mathbf r^\ast\|^2_2\\
	&\le \frac{1/4+\beta}{1+\delta}\sigma^2_{\max}(\mathbf L_+)\|\z^{k}-\z^\ast\|^2_2+ \frac{1}{1+\delta}\|\mathbf r^{k}-\mathbf r^\ast\|^2_2\\
	&+\left[\frac{P+1/2\beta}{(1+\delta)c^2}+b\left(\frac{1}{4}-\frac{\beta}{1+\delta}\right)\sigma_{\min}^2(\mathbf L_+)(\lambda_4-1)\right] \|\e^{k+1}\|^2_2\\
	\end{align}
	which can be further simplified
	\begin{align}\label{96}
	&\|\z^{k+1}-\z^\ast\|^2_2+A_1\|\mathbf r^{k+1}-\mathbf r^\ast\|^2_2
	\le B(\|\z^{k}-\z^\ast\|^2_2+ A_2\|\mathbf r^{k}-\mathbf r^\ast\|^2_2)
	+C\|\e^{k+1}\|^2_2
	\end{align}
	We require the following for convergence analysis
	\begin{align}
	A_1 \ge A_2
	\end{align}
	which leads to the requirement
	\begin{align}
	(1-b)\sigma^2_{\min}(\mathbf L_+) \le (1+\beta)\sigma^2_{\max}(\mathbf L_+).
	\end{align}
	Note that this requirement is satisfied intrinsically.
	
	Therefore, we get
	\begin{align}
	\|\z^{k+1}-\z^\ast\|^2_2+A_1\|\mathbf r^{k+1}-\mathbf r^\ast\|^2_2\le  B^{k+1} \left( \|\z^{0}-\z^\ast\|^2_2+A_2\|\mathbf r^{0}-\mathbf r^\ast\|^2_2+\sum\limits^{k+1}_{s=1} B^{-s}C\|\e^{s}\|^2_2\right)
	\end{align}
	and we have the desired result since $A_1\|r^{k+1}-r^\ast\|^2_2 >0$.

\subsection{$B\in (0,1)$}
The above convergence result requires that $B\in (0,1)$. First, having $\beta$ in Theorem \ref{thm1} at hand, we can make sure that $B$ is greater than 0. Then, it requires that $B<1$ and correspondingly
\begin{align}
{(1+4\beta)\sigma^2_{\max}(\mathbf L_+)}\le {(1-b)(1+\delta-4\beta)\sigma^2_{\min}(\mathbf L_+)}
\end{align}
which is equivalent to that
\begin{align}
&\beta \le \frac{(1-b)(1+\delta)\sigma^2_{\min}(\mathbf L_+)-\sigma^2_{\max}(\mathbf L_+)}{4\sigma^2_{\max}(\mathbf L_+)+4(1-b)\sigma^2_{\min}(\mathbf L_+)}
\end{align}
and
\begin{align}
{(1-b)(1+\delta)\sigma^2_{\min}(\mathbf L_+)-\sigma^2_{\max}(\mathbf L_+)} >0.
\end{align}

Since $b$ can be arbitrarily chosen from  $ (0,1)$, we also need
\begin{align}
 0<\frac{\sigma^2_{\max}(\mathbf L_+)}{(1+\delta)\sigma^2_{\min}(\mathbf L_+)}<1
\end{align}
One intuition is that we should design a network such that $\frac{\sigma^2_{\max}(\mathbf L_+)}{\sigma^2_{\min}(\mathbf L_+)}$ is the smallest possible. Substituting $\delta$ in the expression and we have
\begin{align}
\frac{\sigma^2_{\min}(\mathbf L_+)}{\sigma^2_{\max}(\mathbf L_+)} > \frac{L^2-2v+\sqrt{(L^2+2v)^2+16v^2\frac{\lambda_2-1}{\lambda_2}\sigma^2_{\min}(\Q)}}{4v\frac{\lambda_2-1}{\lambda_2}\sigma^2_{\min}(\Q)+2L^2}.
\end{align}

\section{Proof of Theorem \ref{convergence}}
	Note that $\delta$ is chosen as
	\begin{align}
	\delta=\min\left\lbrace\frac{(\lambda_1-1)(\lambda_2-1)\sigma^2_{\min}(\Q)\sigma^2_{\min}(\mathbf L_+)}{\lambda_1\lambda_2\sigma^2_{\max}(\mathbf L_+)},\frac{4v(\lambda_2-1)(\lambda_3-1)\sigma^2_{\min}(\Q)}{\lambda_1\lambda_2(\lambda_3-1)L^2+c^2\lambda_3(\lambda_2-1)\sigma^2_{\max}(\mathbf L_+)\sigma^2_{\min}(\Q)}\right\rbrace
	\end{align}
	We choose $c$ such that
	\begin{align}
	\lambda_1\lambda_2(\lambda_3-1)L^2=c^2\lambda_3(\lambda_2-1)\sigma^2_{\max}(\mathbf L_+)\sigma^2_{\min}(\Q),
	\end{align}
	which yields
	\begin{align}
	c=\sqrt[]{\frac{\lambda_1\lambda_2(\lambda_3-1)L^2}{\lambda_3(\lambda_2-1)\sigma^2_{\max}(\mathbf L_+)\sigma^2_{\min}(\Q)}}
	\end{align}
	and
	\begin{align}
	\delta=&\min\left\lbrace\frac{(\lambda_1-1)(\lambda_2-1)\sigma^2_{\min}(\Q)\sigma^2_{\min}(\mathbf L_+)}{\lambda_1\lambda_2\sigma^2_{\max}(\mathbf L_+)},\frac{2v(\lambda_2-1)\sigma^2_{\min}(\Q)}{\lambda_1\lambda_2L^2}\right\rbrace \\
	=& \frac{(\lambda_2-1)\sigma^2_{\min}(\Q)}{\lambda_2}\min \left\lbrace  \frac{(\lambda_1-1)\sigma^2_{\min}(\mathbf L_+)}{\lambda_1\sigma^2_{\max}(\mathbf L_+)},\frac{2v}{\lambda_1L^2} \right\rbrace
	\end{align}
	
	It is desirable that $\delta$ can achieve its maximum, which is obtained by
	\begin{align}
	\frac{(\lambda_1-1)\sigma^2_{\min}(\mathbf L_+)}{\lambda_1\sigma^2_{\max}(\mathbf L_+)}=\frac{2v}{\lambda_1L^2} .
	\end{align}
	Therefore, we can set $\lambda_1$ as
	\begin{align}
	\lambda_1=1+\frac{2v\sigma^2_{\max}(\mathbf L_+)}{L^2\sigma^2_{\min}(\mathbf L_+)},
	\end{align}
	and thus, we have $\delta$ as
	\begin{align}
	\delta=\frac{(\lambda_2-1)}{\lambda_2}\frac{2v\sigma^2_{\min}(\Q)\sigma^2_{\min}(\mathbf L_+)}{L^2\sigma^2_{\min}(\mathbf L_+)+2v\sigma^2_{\max}(\mathbf L_+)}
	\end{align}

	The constraint on $\beta$ in Theorem 4 ensures that $B>0$.

	Note that $\lambda_3$ only appears in $C$ and $P$. It is straightforward to derive the optimal $\lambda_3$ to minimize $C$, and we arrive at
	\begin{align}
	\lambda_3=\sqrt[]{\frac{L^2\sigma^2_{\min}(\mathbf L_+)+2v\sigma^2_{\max}(\mathbf L_+)}{\beta\lambda_1L^2v\sigma^2_{\min}(\mathbf L_+)}}+1
	\end{align}
	thus resulting in
	\begin{align}
	C=\frac{\frac{4\delta \lambda_2 \sigma^2_{\max}(\W)}{\sigma^2_{\min}(\Q)}+\sigma^2_{\max}(\mathbf L_+)\left(\sqrt[]{\delta}+\sqrt[]{\frac{2(\lambda_2-1)\sigma^2_{\min}(\Q)}{\beta \lambda_1\lambda_2L^2}}\right)^2}{(1-b)(1+\delta)(1+\delta-4\beta)\sigma^2_{\min}(\mathbf L_+)}+\frac{b(\lambda_4-1)}{1-b}
	\end{align}

\section{Proof of Corollary \ref{strongly_convex_corollary}}
\subsection{First one:}
	According to the result in Theorem \ref{thm1}, we have
	\begin{align}
	\|\z^{k+1}-\z^\ast\|^2_2 &\le B^{k+1}(\|\z^{0}-\z^\ast\|^2_2+ A_1\|\mathbf r^{0}-\mathbf r^\ast\|^2_2)+B^{k+1}\sum\limits^{k+1}_{s=1} B^{-s}C\|\e^{s}\|^2_2
	\end{align}
	and then
	\begin{align}
	\|\z^{k+1}-\z^\ast\|^2_2 & \le B^{k+1}(\|\z^{0}-\z^\ast\|^2_2+ A_1\|\mathbf r^{0}-\mathbf r^\ast\|^2_2)+Ce B^{k+1}\sum\limits^{k+1}_{s=1} B^{-s}\\
	& = B^{k+1}(\|\z^{0}-\z^\ast\|^2_2+ A_1\|\mathbf r^{0}-\mathbf r^\ast\|^2_2)+Ce\frac{1-B^{k+1}}{1-B}\\
	& \le B^{k+1}(\|\z^{0}-\z^\ast\|^2_2+ A_1\|\mathbf r^{0}-\mathbf r^\ast\|^2_2)+\frac{Ce}{1-B}.
	\end{align}
	
	Since $B\in (0,1)$, we have the desired result.
\subsection{Second one:}
	Recall the result in \eqref{96},
	\begin{align}
	\|\z^{k+1}-\z^\ast\|^2_2& \le B^{k+1}(\|\z^{0}-\z^\ast\|^2_2+ A_1\|\mathbf r^{0}-\mathbf r^\ast\|^2_2)+B^{k+1}\sum\limits^{k+1}_{s=1} B^{-s}C\|\e^{s}\|^2_2
	\end{align}
	which then can be written as
	\begin{align}
	\|\z^{k+1}-\z^\ast\|^2_2& \le B^{k+1}(\|\z^{0}-\z^\ast\|^2_2+ A_1\|\mathbf r^{0}-\mathbf r^\ast\|^2_2)+B^{k+1}C\sum\limits^{k+1}_{s=1} B^{-s}R^s\|\e^{0}\|^2_2 \\
	& \le B^{k+1}(\|z^{0}-\z^\ast\|^2_2+ A_1\|\mathbf r^{0}-\mathbf r^\ast\|^2_2)+B^{k+1}C\|\e^{0}\|^2_2\sum\limits^{k+1}_{s=1} \left(\frac{R}{B}\right)^{s}\\
	& \le B^{k+1}(\|\z^{0}-\z^\ast\|^2_2+ A_1\|\mathbf r^{0}-\mathbf r^\ast\|^2_2)+B^{k+1}C\|\e^{0}\|^2_2\frac{R}{B-R}\\
	& = B^{k+1}(\|\z^{0}-\z^\ast\|^2_2+ A_1\|\mathbf r^{0}-\mathbf r^\ast\|^2_2+\frac{RC\|\e^{0}\|^2_2}{B-R})
	\end{align}
	completing the proof.
\subsection{Third one:}
Recall the result in \eqref{96},
\begin{align}
&\|\z^{k+1}-\z^\ast\|^2_2+A_1\|\mathbf r^{k+1}-\mathbf r^\ast\|^2_2
\le B(\|\z^{k}-\z^\ast\|^2_2+ A_2\|\mathbf r^{k}-\mathbf r^\ast\|^2_2)
+C\|\e^{k+1}\|^2_2.
\end{align}
If 
$
C\|\e^{k+1}\|^2_2 \le B(A_1-A_2)\|\mathbf r^k-\mathbf r^\ast\|^2_2,
$
we can write
\begin{align}
\|\z^{k+1}-\z^\ast\|^2_2+A_1\|\mathbf r^{k+1}-\mathbf r^\ast\|^2_2
&\le B(\|\z^{k}-\z^\ast\|^2_2+ A_2\|\mathbf r^{k}-\mathbf r^\ast\|^2_2)
+C\|\e^{k+1}\|^2_2\\
& \le B(\|\z^{k}-\z^\ast\|^2_2+ A_2\|\mathbf r^{k}-\mathbf r^\ast\|^2_2)+ B(A_1-A_2)\|\mathbf r^k-\mathbf r^\ast\|^2_2 \\
& \le B(\|\z^{k}-\z^\ast\|^2_2+ A_1\|\mathbf r^{k}-\mathbf r^\ast\|^2_2).
\end{align}
Then we have
\begin{align}
\|\z^{k+1}-\z^\ast\|^2_2+A_1\|\mathbf r^{k+1}-\mathbf r^\ast\|^2_2
\le B^{k+1}(\|\z^{0}-\z^\ast\|^2_2+ A_1\|\mathbf r^{0}-\mathbf r^\ast\|^2_2),
\end{align}
which leads to
\begin{align}
\|\z^{k+1}-\z^\ast\|^2_2
\le B^{k+1}(\|\z^{0}-\z^\ast\|^2_2+ A_1\|\mathbf r^{0}-\mathbf r^\ast\|^2_2),
\end{align}
completing the proof as $B\in (0,1)$.

\section*{}
\begin{lemma}\label{abs}
There exists a vector $\y\in \RR^N$ and $\sigma_{\min}(\mathbf \y\y^T)=1$, such that $\forall \x\in \RR^N$, $\y^T\x \ge \|\x\|$.
\end{lemma}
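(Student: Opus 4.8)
The plan is first to decode the eigenvalue condition. The matrix $\y\y^T$ is rank one and positive semidefinite, with a unique nonzero eigenvalue $\|\y\|^2$ (eigenvector $\y$). Under the paper's convention that $\sigma_{\min}(\cdot)$ denotes the smallest \emph{nonzero} eigenvalue, the requirement $\sigma_{\min}(\y\y^T)=1$ is therefore equivalent to $\|\y\|=1$. So the lemma asks for a unit vector $\y$ realizing $\y^T\x\ge\|\x\|$, and I would start by restricting attention to the unit sphere, where $\sigma_{\min}(\y\y^T)=1$ holds automatically.

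Next I would produce the vector via the variational (dual-norm) characterization of the Euclidean norm. For the vector $\x$ in question, take $\y=\x/\|\x\|$ when $\x\neq\mathbf{0}$, and any unit vector when $\x=\mathbf{0}$ (where both sides vanish). Then $\y^T\x=\|\x\|$, which meets the asserted bound with equality, while $\sigma_{\min}(\y\y^T)=\|\y\|^2=1$ as required. The reverse direction $\y^T\x\le\|\y\|\,\|\x\|=\|\x\|$ is just Cauchy--Schwarz, so this $\y$ in fact attains the norm exactly rather than merely bounding it.

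The main obstacle, and the point I would be most careful about, is the quantifier as the statement is worded. Because $\y^T\x\ge\|\x\|$ runs against Cauchy--Schwarz (which gives $\y^T\x\le\|\x\|$ for \emph{every} unit $\y$), the inequality can only hold with equality, and equality forces $\y$ to be the unit vector aligned with $\x$. Consequently a single fixed unit $\y$ cannot satisfy the inequality for all $\x$ simultaneously: evaluating at $\x=-\y$ would demand $-1\ge 1$. The existence asserted must therefore be read as producing, for the particular $\x$ arising in the surrounding argument (the deviation/error vector being bounded in the ROAD analysis), the aligning unit vector $\y=\x/\|\x\|$. I would thus invoke the construction with $\x$ held fixed; the verification is then immediate, and the only real subtlety is recognizing this tightness/alignment constraint, which is what pins down $\y$ uniquely from $\x$.
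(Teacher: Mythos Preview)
Your analysis is correct and considerably more careful than the paper's own argument. The paper's proof does not construct $\y$ at all: it simply assumes the inequality $\y^T\x\ge\|\x\|$ holds for all $\x$, squares it to obtain $\x^T\y\y^T\x\ge\x^T\x$, and then declares this equivalent to $\sigma_{\min}(\y\y^T)=1$. No existence is established; the argument only shows that the eigenvalue condition would follow from the assumed inequality, not the converse, and it never produces a vector satisfying either.

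Your route---identifying $\sigma_{\min}(\y\y^T)=1$ with $\|\y\|=1$ via the rank-one structure, and then taking $\y=\x/\|\x\|$ for the particular $\x$ at hand---is the only way to give the statement content, and your observation that the literal $\forall\x$ reading is impossible (via $\x=-\y$ and Cauchy--Schwarz) is exactly right. One caution: in the subsequent lemma the paper applies this with a single $\y$ against the whole family $\{\Q\x^k\}_{k=1}^T$ to pass from $\y^T\Q\hat\x_T$ to $\tfrac{1}{T}\sum_k\|\Q\x^k\|$, which strictly needs the universal version you have shown to be false. So while your per-$\x$ construction salvages the lemma as an isolated statement, the downstream application remains problematic; that defect, however, lies in the paper rather than in your proposal.
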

\begin{proof}
Since $\forall \x\in \RR^D$, $\y^T\x \ge \|\x\|$, it leads to
\begin{align}
\x^T\y\y^T\x\ge \x^T\x,
\end{align}
which is equivalent to 
\begin{align}
\sigma_{\min}(\mathbf \y\y^T)=1.
\end{align}
\end{proof}

\begin{lemma}\label{U}
In the error-free case, starting from $\x^0=0$, we have
\begin{align}
\frac{1}{T}\sum_{k=1}^T\|\Q\x^k\|\le \frac{1}{4T}\left( \sigma_{\max}(\Lb_+)V^2_1+\frac{2V_2^2}{\sigma_{\min}(\Lb_-)c^2}+4\right).
\end{align}
\end{lemma}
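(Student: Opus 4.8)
The plan is to specialise the error-free dynamics ($\e^k=\zero$, so $\z^k=\x^k$) of the one-step estimate already used in the proof of Theorem~\ref{convex}, and then to telescope it from the initialisation $\x^0=\zero$ (which gives $\rb^0=\Q\x^0=\zero$). Starting from Lemma~\ref{lemma4} with $\e^{k+1}=\zero$,
\[
\tfrac{\Lb_+}{2}(\x^{k+1}-\x^k)+\Q(\rb^{k+1}-\rb^\ast)+\tfrac{1}{2c}\bigl(\nabla f(\x^{k+1})-\nabla f(\x^\ast)\bigr)=\zero ,
\]
I would take the inner product with $\x^{k+1}-\x^\ast$ and simplify the three terms. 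The gradient term equals $\tfrac{1}{2c}\langle\x^{k+1}-\x^\ast,\nabla f(\x^{k+1})-\nabla f(\x^\ast)\rangle\ge 0$ by monotonicity of $\nabla f$ and is discarded; using $\Q\x^\ast=\zero$ (Lemma~\ref{lemma5}) the $\Q$-term becomes $\langle\rb^{k+1}-\rb^k,\rb^{k+1}-\rb^\ast\rangle$; and applying the standard completion-of-squares identities to the $\Lb_+$-term and to the $\rb$-term yields
\[
\tfrac14\bigl(\|\x^{k+1}-\x^\ast\|_{\Lb_+}^2-\|\x^{k}-\x^\ast\|_{\Lb_+}^2\bigr)+\tfrac14\|\x^{k+1}-\x^k\|_{\Lb_+}^2+\tfrac12\bigl(\|\rb^{k+1}-\rb^\ast\|^2-\|\rb^{k}-\rb^\ast\|^2\bigr)+\tfrac12\|\Q\x^{k+1}\|^2\le 0 .
\]

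Summing this over $k=0,\dots,T-1$, discarding the nonnegative leftover terms $\|\x^{T}-\x^\ast\|_{\Lb_+}^2$, $\|\rb^{T}-\rb^\ast\|^2$ and $\|\x^{k+1}-\x^k\|_{\Lb_+}^2$, and using $\x^0=\rb^0=\zero$, gives $\sum_{k=1}^{T}\|\Q\x^k\|^2\le\tfrac12\|\x^\ast\|_{\Lb_+}^2+\|\rb^\ast\|^2$. The first summand is controlled by the Assumption, $\|\x^\ast\|_{\Lb_+}^2\le\sigma_{\max}(\Lb_+)\|\x^\ast\|^2\le\sigma_{\max}(\Lb_+)V_1^2$. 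For the second, since $\rb^\ast$ lies in the column space of $\Q$, $\Q^2=\Lb_-/2$, and $\Q\rb^\ast=-\tfrac{1}{2c}\nabla f(\x^\ast)$ (Lemma~\ref{lemma4}),
\[
\|\rb^\ast\|^2\le\frac{\|\Q\rb^\ast\|^2}{\sigma_{\min}(\Q)^2}=\frac{2\|\Q\rb^\ast\|^2}{\sigma_{\min}(\Lb_-)}=\frac{\|\nabla f(\x^\ast)\|^2}{2c^2\sigma_{\min}(\Lb_-)}\le\frac{V_2^2}{2c^2\sigma_{\min}(\Lb_-)} .
\]
Hence $\sum_{k=1}^{T}\|\Q\x^k\|^2$ is bounded by a network- and problem-dependent constant of the shape $\tfrac14\bigl(\sigma_{\max}(\Lb_+)V_1^2+\tfrac{2V_2^2}{\sigma_{\min}(\Lb_-)c^2}\bigr)$, up to tracking the numerical factors.

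The remaining step, which I expect to be the main obstacle, is to pass from this squared-summability bound to the claimed statement for $\tfrac{1}{T}\sum_{k=1}^T\|\Q\x^k\|$. The natural device is Lemma~\ref{abs}: for each $k$ write $\|\Q\x^k\|=\langle\y^k,\Q\x^k\rangle$ with $\|\y^k\|=1$, trade the norm for its square through an elementary estimate (\eg $\|\Q\x^k\|\le\tfrac14\|\Q\x^k\|^2+1$) under the boundedness hypotheses, and average over $k=1,\dots,T$; the additive constant $4$ inside the bound is what absorbs this last estimate. This conversion is the genuinely non-routine part: the telescoping only delivers the standard ADMM fact that the accumulated squared consensus deviations are bounded, and it is precisely in turning that into control of the first power of the deviations over all iterations that the boundedness assumptions and Lemma~\ref{abs} are indispensable and the constants have to be handled with care; everything before it is routine completion-of-squares, telescoping, and spectral estimates.
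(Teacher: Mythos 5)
Your first half is sound and matches the paper's constants: the error-free telescoping does give $\sum_{k=1}^T\|\Q\x^k\|^2\le\tfrac12\|\x^\ast\|^2_{\Lb_+}+\|\rb^\ast\|^2$, and your spectral bounds $\|\x^\ast\|^2_{\Lb_+}\le\sigma_{\max}(\Lb_+)V_1^2$ and $\|\rb^\ast\|^2\le\frac{V_2^2}{2c^2\sigma_{\min}(\Lb_-)}$ are exactly the ones used in the paper. The gap is precisely the conversion step you flagged, and the device you propose does not close it. Averaging $\|\Q\x^k\|\le\tfrac14\|\Q\x^k\|^2+1$ over $k=1,\dots,T$ leaves an additive constant $1$ on the right-hand side that does not decay in $T$; but in the lemma the ``$+4$'' sits \emph{inside} the $\frac{1}{4T}(\cdot)$ bracket, so it contributes only $1/T$ — the entire bound is $\mathcal O(1/T)$. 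Likewise Cauchy--Schwarz applied to your squared-sum bound only yields $\frac1T\sum_k\|\Q\x^k\|=\mathcal O(1/\sqrt T)$. No elementary post-processing of the squared-summability estimate alone can produce an $\mathcal O(1/T)$ bound on the average of the first powers, so the route through $\sum_k\|\Q\x^k\|^2$ is a dead end for this statement.

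The paper never passes through the squared sum. It keeps the function value and the dual term together: for arbitrary $\rb$ it telescopes to $\frac1c\sum_{k=1}^T\bigl(f(\x^k)-f(\x^\ast)+2c\rb^T\Q\x^k\bigr)\le\|\x^0-\x^\ast\|^2_{\Lb_+/2}+\|\rb^0-\rb\|^2$, applies Jensen to get a bound on $f(\hx)-f(\x^\ast)+2c\rb^T\Q\hx$, and then makes the one non-routine move: choose the \emph{single fixed} perturbed dual point $\rb=\rb^\ast+\y$ with $\y$ from Lemma~\ref{abs} (not a per-iteration normalizer $\y^k=\Q\x^k/\|\Q\x^k\|$, which buys nothing). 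The saddle-point inequality $f(\hx)-f(\x^\ast)+2c\langle\Q\rb^\ast,\hx\rangle\ge0$ disposes of the optimality part, and Lemma~\ref{abs} gives $\y^T\Q\hx\ge\frac1T\sum_k\|\Q\x^k\|$, so the first-power average is read off \emph{directly} from the $\frac{c}{T}(\cdot)$ bound; the ``$+4$'' arises from $\|\rb^0-\rb^\ast-\y\|^2\le2\|\rb^0-\rb^\ast\|^2+2\|\y\|^2$ with $\|\y\|^2=1$, multiplied by $\frac{1}{2T}$. To repair your proof you would need to import this perturbed-dual/saddle-point argument; the completion-of-squares part you did is not where the content of the lemma lies.
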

\begin{proof}
First, for any $\rb\in \RR^{DN}$, we obtain
\begin{align}
	&\frac{f(\x^{k+1})-f(\x^\ast)}{c}+\langle2\Q\rb, \x^{k+1} \rangle \\
	=& \langle \x^{k+1}-\x^\ast, -\Lb_+(\x^{k+1}-\x^k)-2\Q(\rb^{k+1}-\rb)  \rangle\\
	=&\langle \x^{k+1}-\x^\ast, -\Lb_+(\x^{k+1}-\x^k)-2\Q(\rb^{k+1}-\rb) \rangle\\
    =&\langle \x^{k+1}-\x^\ast,-\Lb_+(\x^{k+1}-\x^k) \rangle+\langle \rb^{k+1}-\rb^k,-2(\rb^{k+1}-\rb)\rangle.
\end{align}
Telescope and sum from $k=0,\ldots, T$, we can get
\begin{align}
&\frac{1}{c}\sum_{k=1}^{T}{f(\x^{k})-f(\x^\ast)}+2\rb^\prime\Q\x^{k} \\
\le& \|\x^0-\x^\ast\|^2_{\frac{\Lb_+}{2}}-\|\x^T-\x^\ast\|^2_{\frac{\Lb_+}{2}}-\sum_{k=1}^T\|\x^k-\x^{k-1}\|^2_{\frac{\Lb_+}{2}}\\
&+\|\rb^0-\rb\|^2_2-\|\rb^T-\rb\|^2_2-\sum_{k=1}^T\|\rb^k-\rb^{k-1}\|^2_2。
\end{align}
Therefore, we obtain
\begin{align}\label{induction}
\frac{1}{c}\sum_{k=0}^{T}{f(\x^{k})-f(\x^\ast)}+2\rb^\prime\Q\x^{k} \le \|\x^0-\x^\ast\|^2_{\frac{\Lb_+}{2}}+\|\rb^0-\rb\|^2_2
\end{align}

Define $\hat{\x_T}=\frac{\sum_{k=1}^T\x^k}{T}$ and we get the following by Jensen's inequality as
\begin{align}
f(\hx)-f(\x^\ast) +2c\rb^\prime\Q\hx \le \frac{c}{T}\|\p^0-\p\|_\G^2.
\end{align}
If we choose $\rb=0$, we obtain
\begin{align}
f(\hx)-f(\x^\ast) \le \frac{c}{T}\left(\|\x^0-\x^\ast\|^2_{\frac{\Lb_+}{2}}+\|\rb^0\|^2_2\right).
\end{align}
The saddle point inequality implies
\begin{align}
f(\x^\ast)-f(\hx) \le 2c\langle \Q\rb^\ast, \hx\rangle .
\end{align}
Thus, using \eqref{induction}, it yields
\begin{align}
2c\langle \Q\rb^\ast, \hx\rangle \le f(\hx)-f(\x^\ast)+ 2c\langle \Q2\rb^\ast, \hx\rangle \le \frac{c}{T}\left(\|\x^0-\x^\ast\|^2_{\frac{\Lb_+}{2}}+\|\rb^0-2\rb^\ast\|^2_2\right)
\end{align}
Now we let $\rb=\rb^\ast+\y$ with $\y$ chosen according to Lemma \ref{abs}. Thus, we obtain
\begin{align}
f(\hx)-f(\x^\ast) +2c\langle\Q\rb^\ast,\hx \rangle + 2c\y^T\Q\hx  \le \frac{c}{T}\left(\|\x^0-\x^\ast\|^2_{\frac{\Lb_+}{2}}+\|\rb^0-\rb^\ast-\y\|^2_2\right).
\end{align}
Since $(\x^\ast,\rb^\ast)$ is a primal-dual optimal solution, the saddle point inequality provides
\begin{align}
f(\hx)-f(\x^\ast) +2c\langle\Q\rb^\ast,\hx \rangle \ge 0.
\end{align}
Using Lemma \ref{abs}, we obtain
\begin{align}
\frac{2c}{T}\sum_{k=1}^T\|\Q\x^k\|\le \frac{c}{T}\left(\|\x^0-\x^\ast\|^2_{\frac{\Lb_+}{2}}+\|\rb^0-\rb^\ast-\y\|^2_2\right),
\end{align}
which yields
\begin{align}
\frac{1}{T}\sum_{k=1}^T\|\Q\x^k\|\le \frac{1}{2T}\left(\|\x^0-\x^\ast\|^2_{\frac{\Lb_+}{2}}+2\|\rb^0-\rb^\ast\|_2^2+2\right).
\end{align}

Choose the starting point $\x^0=0$ and thus $\rb^0=0$, and we have 
\begin{align}
\frac{1}{T}\sum_{k=1}^T\|\Q\x^k\|\le \frac{1}{2T}\left(\|\x^\ast\|^2_{\frac{\Lb_+}{2}}+2\|\rb^\ast\|_2^2+2\right)\le \frac{1}{4T}\left( \sigma_{\max}(\Lb_+)V^2_1+\frac{2V_2^2}{\sigma_{\min}(\Lb_-)c^2}+4\right).
\end{align}
\end{proof}

\section{Proof of Theorem \ref{radmm}}
	For any $\rb \in \RR^{DN} $, we can write
	\begin{align}
		&\frac{f(\x^{k+1})-f(\x^\ast)}{c}+2\rb^\prime\Q\x^{k+1}\\
		\le& \langle \x^{k+1}-\x^\ast, -\Lb_+(\x^{k+1}-\x^k)-\Lb_+(\x^k-\z^k)-2\Q(\rb^{k+1}-\rb)+\Lb_-(\z^{k+1}-\x^{k+1})\rangle\\
		=&\langle \x^{k+1}-\x^\ast, \Lb_+(\x^{k}-\x^{k+1})\rangle+\langle \x^{k+1}-\x^\ast,\Lb_+(\z^k-\x^k)\rangle+\langle\x^{k+1}-\x^\ast,2\Q(\rb-\rb^{k+1}) \rangle\\
		&+\langle\x^{k+1}-\x^\ast,\Lb_-(\z^{k+1}-\x^{k+1}) \rangle\\
		=&\langle \x^{k+1}-\x^\ast, \Lb_+(\x^{k}-\x^{k+1})\rangle+\langle \x^{k+1}-\x^\ast,\Lb_+(\z^k-\x^k)\rangle+\langle\z^{k+1}-\x^\ast,2\Q(\rb-\rb^{k+1}) \rangle\\
		&+\langle\x^{k+1}-\x^\ast,\Lb_-(\z^{k+1}-\x^{k+1}) \rangle+\langle \e^{k+1},2\Q(\rb^{k+1}-\rb)\rangle\\
		=&\langle \x^{k+1}-\x^\ast, \Lb_+(\x^{k}-\x^{k+1})\rangle+\langle \x^{k+1}-\x^\ast,\Lb_+(\z^k-\x^k)\rangle+\langle\rb^{k+1}-\rb^k,2(\rb-\rb^{k+1}) \rangle\\
		&+\langle\x^{k+1}-\x^\ast,\Lb_-(\z^{k+1}-\x^{k+1}) \rangle+\langle \e^{k+1},2\Q(\rb^{k+1}-\rb)\rangle\\
		=&\frac{1}{c}(\|\p^k-\p\|^2_\G-\|\p^{k+1}-\p\|^2_\G-\|\p^{k+1}-\p^k\|^2_\G)+\langle \x^{k+1}-\x^\ast,\Lb_+(\z^k-\x^k)\rangle\\
		&+\langle\x^{k+1}-\x^\ast,\Lb_-(\z^{k+1}-\x^{k+1}) \rangle+\langle \e^{k+1},2\Q(\rb^{k+1}-\rb)\rangle\\
		=&\frac{1}{c}(\|\p^k-\p\|^2_\G-\|\p^{k+1}-\p\|^2_\G)-\|\Q\x^{k+1}\|_2^2-\|\Q\e^{k+1}\|_2^2+2\langle \frac{\Lb_+}{2}(\x^{k+1}-\x^\ast),\z^k-\x^k\rangle\\
		&+\langle \e^{k+1},2\Q(\rb^{k+1}-\rb)\rangle\\
		=&\frac{1}{c}(\|\p^k-\p\|^2_\G-\|\p^{k+1}-\p\|^2_\G)-\frac{\sigma_{\min}(\Lb_-)}{2}\|\x^{k+1}-\x^\ast\|_2^2-\|\Q\e^{k+1}\|_2^2\\
		&+\frac{1}{\alpha}\|\frac{\Lb_+}{2}(\x^{k+1}-\x^\ast)\|^2_2+\alpha\|\z^k-\x^k\|^2_2+\langle \e^{k+1},2\Q(\rb^{k+1}-\rb)\rangle\\
		\myeq&\quad\quad\quad \frac{1}{c}(\|\p^k-\p\|^2_\G-\|\p^{k+1}-\p\|^2_\G)-\|\Q\e^{k+1}\|_2^2+\frac{\sigma^2_{\max}(\Lb_+)}{2\sigma_{\min}(\Lb_-)} \|\z^k-\x^k\|^2_2\\
		&+\langle \e^{k+1},2\Q(\rb^{k+1}-\rb)\rangle\\
		=& \frac{1}{c}(\|\p^k-\p\|^2_\G-\|\p^{k+1}-\p\|^2_\G)-\|\Q\e^{k+1}\|_2^2+\frac{\sigma^2_{\max}(\Lb_+)}{2\sigma_{\min}(\Lb_-)} \|\z^k-\x^k\|^2_2\\
		&+\|2\Q \e^{k+1}\|\|\rb^{k+1}-\rb\|
	\end{align}
    
    Algorithm \textsf{ROAD} guarantees that $\sum_{t=1}^k \|\Q\z^t\| \le 2EU/\sqrt[]{2}=\sqrt[]{2}EU$, and $\sum_{t=1}^k \|\Q\X^t\| \le \sqrt[]{2}EU$ due to the thresholding as well. Thus, we have $\sum_{t=1}^k \|\Q\e^t\| \le 2\sqrt{2}EU$. Then, we can have
    \begin{align}
    \frac{f(\x^{k+1})-f(\x^\ast)}{c}+2\rb^\prime\Q\x^{k+1}
		\le & \frac{1}{c}(\|\p^k-\p\|^2_\G-\|\p^{k+1}-\p\|^2_\G)-\|\Q\e^{k+1}\|_2^2\\
        &+\frac{\sigma^2_{\max}(\Lb_+)}{\sigma^2_{\min}(\Lb_-)} \|\Q\e^k\|^2_2+\|2\Q \e^{k+1}\|(\sqrt{2}EU+\|\rb\|).
    \end{align}
	Telescope and sum from $k=0$ to $T-1$ ($\e^{T}=0$ since it is the last iteration), and we get
	\begin{align}
\sum_{k=1}^{T}{f(\x^{k})-f(\x^\ast)}+2c\rb^\prime\Q\x^{k} &\le \|\p^0-\p\|^2_\G-\|\p^{T}-\p\|^2_\G+2c\sum_{k=1}^T\|\Q\e^k\|(\sqrt{2}EU+\|\rb\|)\\
&+c\frac{\sigma^2_{\max}(\Lb_+)-\sigma_{\min}^2(\Lb_-)}{\sigma^2_{\min}(\Lb_-)}\sum_{k=1}^T\|\Q\e^k\|^2_2\\
&\le \|\p^0-\p\|^2_\G-\|\p^{T}-\p\|^2_\G+c4\sqrt{2}EU(\sqrt{2}EU+\|\rb\|)\\
&+c\frac{\sigma^2_{\max}(\Lb_+)-\sigma_{\min}^2(\Lb_-)}{\sigma^2_{\min}(\Lb_-)}8E^2U^2\\
&=\|\p^0-\p\|^2_\G-\|\p^{T}-\p\|^2_\G+c4\sqrt{2}EU\|\rb\|\\
&+c\frac{\sigma^2_{\max}(\Lb_+)}{\sigma^2_{\min}(\Lb_-)}8E^2U^2.
	\end{align}

Choosing $\rb=0$, we obtain
\begin{align}
	f(\hat{\x}_T)-f(\x^\ast) \le \frac{1}{T}\left( \|\p^0-\p\|^2_\G+8c\frac{\sigma^2_{\max}(\Lb_+)}{\sigma^2_{\min}(\Lb_-)}E^2U^2\right).
\end{align}

\end{document}